\pgfplotsset{compat=newest}
\algnewcommand{\Initialize}{%
  \State \textbf{Initialize:}
}
\algnewcommand{\Output}{%
  \State \textbf{Output:}
}
\newtheorem{theorem}{Theorem}
\newtheorem{corollary}{Corollary}
\newtheorem{lemma}{Lemma}
\newtheorem{observation}{Observation}
\newtheorem{proposition}{Proposition}
\newtheorem{assumption}{Assumption}
\newcommand{\verts}{\ensuremath{\mathcal{V}}}
\newcommand{\edges}{\ensuremath{\mathcal{E}}}
\newcommand{\nnodes}{\ensuremath{n}}
\newcommand{\nodeidx}{\ensuremath{i}}
\newcommand{\degr}[1][i]{\ensuremath{\text{deg}(#1)}}
\newcommand{\hitk}[1][\rwidx]{\ensuremath{H_{i,j}(#1)}}
\newcommand{\hit}{\ensuremath{H_{i,j}}}
\newcommand{\selfhit}[1][\rwidx]{\ensuremath{H_{i,i}(#1)}}
\newcommand{\return}[1][\nodeidx]{\ensuremath{R_{\nodeidx}}}
\newcommand{\E}{\ensuremath{\mathrm{E}}}
\newcommand{\nwalks}[1][t]{\ensuremath{Z_{#1}}}
\newcommand{\nwalksrw}[1][t]{\ensuremath{\mathsf{Z}_{#1}}}
\newcommand{\rwidx}{\ensuremath{k}}
\newcommand{\rwidxtwo}{\ensuremath{\ell}}
\newcommand{\ntarget}{\ensuremath{Z_0}}
\newcommand{\lastseen}[1][\rwidx]{\ensuremath{L_{i, #1}(t)}}
\newcommand{\lastseenset}{\ensuremath{\mathcal{L}_{i}(t)}}
\newcommand{\pfork}{\ensuremath{p_{\text{fork}}}}
\newcommand{\returnrate}{\ensuremath{\lambda_r}}
\newcommand{\arrivalrate}{\ensuremath{\lambda_a}}
\newcommand{\activeidx}{\ensuremath{k}}
\newcommand{\activesym}{\ensuremath{\mathcal{A}}}
\newcommand{\terminatedsym}{\ensuremath{\mathcal{D}}}
\newcommand{\nest}{\ensuremath{\hat{\theta}_\nodeidx(t)}}
\newcommand{\rwest}[1][\rwidx]{\ensuremath{S(t-\lastseen[#1])}}
\newcommand{\rwestri}[1][\rwidx]{\ensuremath{S(\returnsample)}}
\newcommand{\termtime}{\ensuremath{{T_d}}}
\newcommand{\returncdf}[1][t]{\ensuremath{\hat{F}_{\return}(#1)}}
\newcommand{\pforkscale}{\ensuremath{p}}
\newcommand{\thres}{\ensuremath{\varepsilon}}
\newcommand{\thresterm}{\ensuremath{\varepsilon_2}}
\newcommand{\thresmp}{\ensuremath{\varepsilon_{\text{mp}}}}
\newcommand{\returnsample}{\ensuremath{r_i}}
\newcommand{\stablenrws}{\ensuremath{\nactive}}
\newcommand{\stabledelta}{\ensuremath{\delta^\prime}}
\newcommand{\define}{\ensuremath{:=}}
\newcommand{\forktime}{\ensuremath{T_f}}
\newcommand{\arrivaltime}{\ensuremath{T_a}}
\newcommand{\arrivaltimerv}{\ensuremath{\mathsf{T}_a}}
\newcommand{\forksym}{\ensuremath{\mathcal{F}}}
\newcommand{\epstmp}{\ensuremath{\varepsilon^\prime}}
\newcommand{\ncurrent}{\ensuremath{\nactive}}
\newcommand{\pforkcurrent}[1][\ncurrent]{\ensuremath{p_{#1}}}
\newcommand{\deltanoforkstable}{\ensuremath{\delta_1}}
\newcommand{\tnofork}[1][\deltanofork]{\ensuremath{T_N}}
\newcommand{\tallvisit}[1][\deltanofork]{\ensuremath{T_V}}
\newcommand{\tcover}[1][\deltanofork]{\ensuremath{T_C^\prime}}
\newcommand{\ttotal}{\ensuremath{T}}
\newcommand{\nactive}{\ensuremath{K}}
\newcommand{\nterm}{\ensuremath{D}}
\newcommand{\deltanofork}[1][\nterm]{\ensuremath{\delta_{#1}(\ttotal)}}
\newcommand{\deltanoforksol}[1][\nterm]{\ensuremath{\delta_{#1}}}
\newcommand{\uniform}{\ensuremath{\mathcal{U}}}
\newcommand{\nmax}{\ensuremath{m}}
\newcommand{\tnoforki}[1][\rwstep]{\ensuremath{T_{#1,2}}}
\newcommand{\tallvisiti}[1][\rwstep]{\ensuremath{T_{#1,1}}}
\newcommand{\nbound}{\ensuremath{z}}
\newcommand{\rwstep}{\ensuremath{\nu}}
\newcommand{\deltavisiti}{\ensuremath{\delta_{\rwstep, 1}}}
\newcommand{\deltadoubleforki}{\ensuremath{\delta_{\rwstep, 2}}}
\newcommand{\deltasumi}{\ensuremath{\delta_{\rwstep}}}
\newcommand{\forkcdfvar}{\ensuremath{x}}
\newcommand{\forkestrv}{\ensuremath{\hat{\theta}_{\forktime, \termtime}(t)}}
\newcommand{\forkcdfcond}[1][\forkcdfvar]{\ensuremath{F_{\forkestrv}(#1 \vert \arrivaltimerv \! = \! \arrivaltime)}}
\newcommand{\forkcdf}[1][\forkcdfvar]{\ensuremath{F_{\forkestrv}(#1)}}
\newcommand{\arrivalpdf}[1][t]{\ensuremath{f_{\arrivaltimerv}(#1)}}
\newcommand{\termtimes}{\ensuremath{\mathcal{T}_D}}
\newcommand{\forktimes}{\ensuremath{\mathcal{T}_F}}
\newcommand{\graph}{\ensuremath{\mathcal{G}}}
\newcommand{\algname}{\textsc{DecAFork}\xspace}
\newcommand{\algnameplus}{\textsc{DecAFork+}\xspace}
\newcommand{\missingperson}{\textsc{MissingPerson}\xspace}
\newcommand{\pfail}{\ensuremath{p_f}}
\newcommand{\ptransit}{\ensuremath{p_b}}
\newcommand{\lastevent}{\ensuremath{T_\ell}}
\newcommand{\tlbfork}[1][\nterm]{\ensuremath{T_{#1}}}
\newcommand{\tforkn}{\ensuremath{T_\nterm^{\recovered^\prime}}}
\newcommand{\recovered}{\ensuremath{R}}
\newcommand{\deltasum}{\ensuremath{\delta_\Sigma}}
\newcommand{\history}[1][t]{\ensuremath{\mathcal{H}_{#1}}}
\newcommand{\pforkt}[1][t]{\ensuremath{\pfork(\history[#1])}}
\newcommand{\nrwtmp}{\ensuremath{\kappa}}
\newcommand{\forkboundt}[1][t]{\ensuremath{\bar{p}_{\text{fork}}(\history[#1])}}
\newcommand{\ptermt}[1][t]{\ensuremath{\bar{p}_{\text{term}}(\history[#1])}}
\newcommand{\firstfork}{\ensuremath{t_0}}
\newcommand{\forkidxtmp}{\ensuremath{\rwstep}}
\newcommand\fs@spaceruled{\def\@fs@cfont{\bfseries}\let\@fs@capt\floatc@ruled
  \def\@fs@pre{\vspace{.22cm}\hrule height.8pt depth0pt \kern2pt}%
  \def\@fs@post{\kern2pt\hrule\relax}%
  \def\@fs@mid{\kern2pt\hrule\kern2pt}%
  \let\@fs@iftopcapt\iftrue}
\newif\iffigures
\begin{document}
\title{Self-Regulating Random Walks for Resilient Decentralized Learning on Graphs \vspace{-0.0cm}}

\author{%
  \IEEEauthorblockN{Maximilian~Egger\IEEEauthorrefmark{1},
                    Rawad~Bitar\IEEEauthorrefmark{1},
                    Ghadir~Ayache\IEEEauthorrefmark{2},
                    Antonia~Wachter-Zeh\IEEEauthorrefmark{1}
                    and Salim~El~Rouayheb\IEEEauthorrefmark{3}}
  \IEEEauthorblockA{\IEEEauthorrefmark{1}%
                   Technical University of Munich, Munich, Germany \{maximilian.egger, rawad.bitar, antonia.wachter-zeh\}@tum.de}
  \IEEEauthorblockA{\IEEEauthorrefmark{2}%
                       Etsy Inc, New York, USA,
                    ayache.ghad@gmail.com}
  \IEEEauthorblockA{\IEEEauthorrefmark{3}%
                    Rutgers University, New Brunswick, USA,
                    salim.elrouayheb@rutgers.edu\vspace{-.5cm}}
    \thanks{This project has received funding from the German Research Foundation (DFG) under Grant Agreement Nos. BI 2492/1-1 and WA 3907/7-1. The work of S. El Rouayheb  was supported in part by the Army Research
Lab (ARL) under Grant W911NF-21-2-0272, and in part by
the National Science Foundation (NSF) under Grant CNS2148182. Preliminary results of this work were presented at IEEE GLOBECOM 2024 \cite{egger2024self}.}
}

\maketitle

\begin{abstract}
 Consider the setting of multiple random walks (RWs) on a graph executing a certain computational task. For instance, in decentralized learning via RWs, a model is updated at each iteration based on the local data of the visited node and then passed to a randomly chosen neighbor. RWs can fail due to node or link failures. The goal is to maintain a desired number of RWs to ensure failure resilience. Achieving this is challenging due to the lack of a central entity to track which RWs have failed to replace them with new ones by forking (duplicating) surviving ones. Without duplications, the number of RWs will eventually go to zero, causing a catastrophic failure of the system. We propose two decentralized algorithms called \algname and \algnameplus that can maintain the number of RWs in the graph around a desired value even in the presence of arbitrary RW failures. Nodes continuously estimate the number of surviving RWs by estimating their return time distribution and fork the RWs when failures are likely to happen. \algnameplus additionally allows terminations to avoid overloading the network by forking too many RWs. We present extensive numerical simulations that show the performance of \algname and \algnameplus regarding fast detection and reaction to failures compared to a baseline, and establish theoretical guarantees on the performance of both algorithms.

\end{abstract}

\section{Introduction}

Decentralized systems consist of a collection of users who cooperate to accomplish a given task without the need for a central node coordinating the process. Users are modeled by vertices in a graph, and communication links by edges. %
Random walks on graphs are considered efficient and scalable decentralized solutions. A virtual token decides which user can run computations. The user holding the token runs local computations, updates the task, and passes the task and the token to one of its randomly chosen neighbors. 
This repeats until the task is declared accomplished. The motivating example is decentralized learning \cite{Johansson2009ARI, Nedic2009DistributedSM, sun2018markov, Ayache2019RandomWG}, where the task is to train a neural network on the union of the users' data. The user holding the token runs local iterations, updates the model, and passes it to a neighbor until some convergence criterion is met. When moving, the token draws a \emph{random walk} over the graph.

Random walks (RWs) have seen a surge of interest in various fields, such as networking \cite{johansson2007simple, johansson2010randomized}, optimization \cite{ram2009incremental,duchi2012ergodic, sun2018markov,ayache2021,ayache2022walk,liu2024entrapment} and signal processing \cite{rabbat2005quantized, zhao2014asynchronous,mao2020walkman}. 
Our main motivation for studying RWs stems from their application in decentralized learning, although their applicability goes beyond to various other fields. RW-based learning algorithms are proposed as a communication-efficient alternative to Gossip algorithms \cite{Boyd2005GossipAD, Shah2009GossipA,Lu2010GossipAF,boyd2006randomized,nedic2009distributed,koloskova2019decentralized,duchi2011dual} that require every node to run local computations and broadcast their updated model to all their neighbors. A trade-off between the advantages of RWs and Gossip-approaches has been studied recently in \cite{gholami2024digest}. RW algorithms are key in distributed communication \cite{Maddah_gossip_RW_for_local_route_discovery,sarma2015efficient}, notably for decentralized model updates in distributed learning \cite{Johansson2009ARI, Nedic2009DistributedSM, ayache2021}. 
However, while RW algorithms can score with their efficiency, they lack failure resilience. It is enough for the node, or the communication link, holding the token to fail to cause a \emph{catastrophic failure}; losing all progress made thus far. Guaranteeing resilience against catastrophic failures is, hence, paramount to decentralized settings using RWs. %
Existing research has primarily addressed common anomalies like delays \cite{wooter_gossip_delays}. 
To the best of our knowledge, the challenging task of guaranteeing adaptive resilience to failures has not yet been studied in the literature despite recent progress on RW-based learning algorithms. We propose the first solution to this problem.

The main challenge in guaranteeing failure tolerance is the absence of a central entity orchestrating the process. A naive solution is to run multiple RWs in parallel. However, since no central entity is tracking the RWs, after certain failures occur, all RWs may fail, leading to a catastrophic failure. Hence, a decentralized mechanism allowing the nodes to dynamically and independently adjust the number of RWs in the system is needed. This motivates us to pose this question: \emph{how to devise decentralized control algorithms that can maintain, at any time, a desired} level of redundancy \emph{(multiple RWs running in parallel) in the system by quickly reacting to failures and dynamically creating RWs without making any a priori assumptions about the statistics of failures?}  

Control algorithms can affect the communication and computation overhead, therefore we impose the following rules:

\noindent{\bf Rule 1:} No central entity can observe and communicate with all the nodes in the graph. Moreover, nodes can only communicate with their neighbors.  \\
\noindent{\bf Rule 2:} RWs cannot directly communicate with each other\footnote{The nodes currently visited by the RWs cannot directly communicate with each other outside the graph. Otherwise,  the problem can be easily solved by letting the RWs regularly ping each other to indicate that they are still alive. }. \\
 \noindent{\bf Rule 3:}  An RW can be forked (creating an independent duplicate copy) or terminated by the currently visited node.

A straightforward solution here can be to let each node independently fork (create a duplicate copy of) an RW after a prescribed time $T$ to replace any possible failure that may have occurred. While such an algorithm satisfies our three rules above, it has the following undesired drawback that we want to avoid: either the network is flooded with an ever-increasing number of RWs (for small $T$) or all the RWs eventually fail (for large $T$). That is because the failures are arbitrary with no assumption on their statistics that could have helped with designing $T$. Our objective is to design a decentralized algorithm that can avoid these two extreme cases\footnote{Evidently, if all the RWs fail at the same time, this would be a catastrophic event that no algorithm can recover from.} and guarantees over time a constant number of RWs in the graph. While the main motivation for this work is decentralized learning, we design an algorithm that can be widely applied to any RW-based system.

\emph{Contributions:} In this paper, we propose \algname and \algnameplus, two novel \underline{dec}entralized (randomized) \underline{a}lgorithms \underline{fork}ing RWs, capable of detecting and reacting to deviations from a desired level of redundancy. %
Our algorithms satisfy Rules~1-3 %
and do not assume any specific failure model. %
The key component of any desirable algorithm is a fast reaction time representing the time the algorithm takes to bring the system back to the desired number of RWs after failures occur (see \cref{fig:catastrophic_failures}), and avoiding exceeding the desired redundancy. We show that these objectives can be contradictory and that our algorithms provide a satisfying trade-off. 
A key ingredient of our algorithms is a distributed estimator of the number of surviving random walks that can be implemented at each node to enable decentralized control decisions in the form of forks and terminations. 
We derive theoretical guarantees on the reaction time and the redundancy level by thoroughly analyzing the time-dependent properties of the proposed estimator used in our algorithms and the interplay with the graph setting at hand. We show through extensive numerical experiments that the number of RWs in the graph %
is maintained around a desired value when challenged with a variety of threat models and different graphs.

\section{System Model} \label{sec:system_model}
{\em Graph: } We consider a system of $\nnodes$ nodes that want to collaborate  in a decentralized fashion  to run a certain  computational task. Each node possesses local data and can communicate only with a subset of the other nodes. We model this system by an undirected graph $\graph = (\verts, \edges)$ with $\nnodes$ vertices $\verts = [\nnodes] \define \{1, \dots, \nnodes\}$ representing the nodes and the set of edges (links) $\edges$ comprising of all pairs of nodes $(i, j)$, $(j, i)$ that can share information.  We will assume that graph \graph\ is  connected\footnote{ If the graph is not connected, our study can be applied separately to each connected component.} making the resulting RWs irreducible \cite{Levin2017MarkovCA}.   The degree $\degr$ of a node $i \in \verts$ is defined as the number of its neighbors, i.e., $\degr = \vert \{(i, j) \in \edges\} \vert$.

{\em Random Walks: }We are interested in tasks that can be accomplished by an RW carrying a token message on the graph. At discrete time steps, only the node holding the token (the currently visited node) may run some computation and update the message. Then, it passes the updated token message to one of its neighbors chosen randomly according to a fixed transition matrix $P$. We consider simple random walks, where the node to whom a node sends the token is selected uniformly at random from all neighbors. This is repeated until certain stopping criteria are met. We abuse nomenclature and refer to the token as the Random Walk (RW). Since we consider multiple RWs, each RW, indexed by $k$, is distinguishable by the nodes they visit through a unique identifier.   %

{\em Failures of Random Walks: } Random walks on the graph may fail arbitrarily. Potential reasons include that the nodes or links are temporarily down leading to the loss of the token when being passed to the next node. Similarly, a random walk can fail due to queue or buffer overflows, or the node currently holding the token may face processing issues, making it impossible to pass on the token. We seek algorithms that dynamically adapt to any number of failures occurring over time. Therefore, for the algorithm design, we do not impose any assumptions on the failure model. To validate our algorithms, we consider three different threat models: 1) a burst failure happens where multiple RWs fail simultaneously, 2) RWs fail independently of each other with probability $\pfail$ every time they visit a node, and 3) there exists a Byzantine node modeled by a two-state Markov chain with transition probability $\ptransit$, which deterministically terminates all incoming random walks when being in one state (Byz), and behaves according to the protocol otherwise (No Byz). We show the performance of our algorithms for failure models 1)-3) in \cref{fig:catastrophic_failures,fig:probabilistic_failures,fig:byzantine_failures}, respectively. We choose those failure models as they cover a large class of common failures in decentralized systems, e.g., \cite{palmieri2024robustness}.

{\em Forking: }To avoid catastrophic failures, an RW is allowed to be duplicated in the following way. After updating the RW, the currently visited node will make a duplicated \emph{identical copy} of the RW that is propagated on the graph according to the transition matrix $P$ independently of the original one. %

{\em Terminating: } To improve the reaction time to failure events while limiting the redundancy, currently visited nodes can terminate the visiting RW. Such termination is final and involves complete loss of the information held by the RW. 

Subsequently, let $\nwalksrw$ be the random variable modeling the number of RWs at time $t$, and let $\nwalks$ be its realization at time $t$.
The objective is to design a fully decentralized algorithm that abides by Rules 1-3 and satisfies the following objectives.
\begin{itemize}
    \item \textbf{Stability:} The number of RWs $\nwalksrw$ is maintained around a desired target $\ntarget$, i.e., $\Pr(\vert \nwalksrw - \ntarget \vert < \epsilon) \approx 1$ for a small $\epsilon\geq 0$, even after failures occur. The redundancy reflected by $\nwalks$ does not significantly exceed $\ntarget$, which can be especially challenging after failure events.
    \item \textbf{Resilience:} At least one RW maintains activity at all times and particularly after a failure event.
    \item \textbf{Reaction:} Rapid detection and reaction times to failure events, enabling the system to tolerate and rapidly adapt to future failure events.
\end{itemize}
The challenge is to achieve a trade-off between frequently forking RWs and flooding the network, or seldomly forking RWs and risking a catastrophic failure.

For the optimal functioning of our algorithms and the validity of our theoretical results, all $\ntarget$ random walks should have been active for long enough to have visited each node at least once before the first failure of an RW. Apart from this assumption, our algorithms do not rely on any assumptions about the graph structure. For a tractable theoretical analysis, we will later introduce some required assumptions in \cref{assumption:distributions}.

\section{Main Results: \algname \& \algnameplus} \label{sec:main_results}

In this section, we first provide a simple and intuitive algorithm, termed \missingperson, that will serve as a baseline for our proposed algorithms and that will illustrate the problem and the challenges faced in algorithmic designs for failure resilience in RW methods. We introduce and examine \algname, a forking-only based method that serves as a building block for \algnameplus that additionally utilizes deliberate terminations. Numerical experiments are provided to show the difference between the algorithms. The theoretical analysis of the algorithms follows in \cref{sec:theory_decafork,sec:theory_algplus}.

\floatstyle{spaceruled}%
\restylefloat{algorithm}
\newlength{\textfloatsepsave}
\setlength{\textfloatsepsave}{\textfloatsep}
\setlength{\textfloatsep}{-10pt}
\begin{algorithm}[t!]
\caption*{\textbf{\missingperson}: Executed when RW $\rwidx$ visits node $\nodeidx$ at $t$}\label{alg:missingperson}
\begin{algorithmic}
\Require $\thres$, $\ntarget$, $k$, $t$, $\forall \rwidxtwo \in [\ntarget]: \lastseen[\rwidxtwo]$, %
\State Update $\lastseen \gets t$
\For{$\rwidxtwo \in [\ntarget] \setminus \{\rwidx\}$}
\If{$t-\lastseen[\rwidxtwo]>\thresmp$}
\State Fork RW $\rwidx$ with probability $\pforkscale = \frac{1}{\ntarget}$ 
\State with identifier $\rwidxtwo$ in replacement of RW $\rwidxtwo$
\EndIf
\EndFor
\end{algorithmic}
\end{algorithm}
\setlength{\textfloatsep}{\textfloatsepsave}

\subsection{\missingperson\ - A Baseline Method} Assume we start with $\ntarget$ RWs on the graph. The goal is to maintain $\nwalksrw$ around $\ntarget$ by replacing the initial RWs when they are deemed missing. Each node $\nodeidx \in [\nnodes]$ initializes for each RW $\rwidxtwo \in [\ntarget]$ a variable $\lastseen[\rwidxtwo]$ that tracks the last time the RW was visiting, i.e., {\color{green!50!black}$\forall \rwidxtwo \in [\ntarget]: \lastseen[\rwidxtwo] = 0$}. Each time an RW $\rwidx$ visits node $\nodeidx$, the node checks for all other RWs $\rwidxtwo \in [\ntarget]\setminus \{\rwidx\}$ the time difference since when the RWs were seen last. If any RW $\rwidxtwo$ was not seen for more than $\thresmp$ timesteps, it deems the RW missing and forks a new one with probability $\pforkscale$ in replacement with the same identifier $\rwidxtwo$. Intuitively, this approach is to ensure that for each initial RW at least one is maintained active in the network. We term this approach \missingperson and summarize the algorithm above.

The challenge in such algorithms is that the distribution of inter-arrival times depends on the graph and on the location of the nodes within the graph, making finding a good threshold $\thresmp$ a challenging task. With an appropriately chosen $\thresmp$, the risk that more than one RWs is forked in replacement for a missing one remains. We will next present an algorithm that alleviates this drawback, and significantly improves the resilience against failures.

\subsection{\algname: Robustness by Careful Forking}
We introduce \algname, a decentralized algorithm that forks RWs probabilistically to avoid catastrophic failures without flooding the network with RWs.
\algname works as follows. The network starts with a target number $\ntarget$ of RWs\footnote{This can be instantiated by one node creating $\ntarget$ RWs in the beginning.}. At every time instance, when a node receives an RW, it estimates the number of \emph{active} RWs in the network. If this number is too low, the node forks the visiting RW with probability $p = 1/\ntarget$ so that, on average, one RW is forked at a given time.

To estimate the number of active RWs in the system, we rely on the RW's hitting time and return time. The \emph{hitting time} $\hitk$ of an RW $\rwidx$ at node $\nodeidx$  is the random variable that denotes the first time the RW gets to node $i$ starting from node $j$. Let $v_t^{(\rwidx)}$ be the node visited by RW $\rwidx$ at time $t$, we can define the hitting time as $\hitk = \min \{ t \in \mathbb{N}, v_t^{(\rwidx)} = i \,|\, v^{(\rwidx)}_0 = j, v_1^{(\rwidx)},\dots, v_{t-1}^{(\rwidx)} \neq i \} $. Furthermore, the random variable $\return(\rwidx) = \selfhit$ describes the first return time of a random walk to node $\nodeidx$ after leaving node $\nodeidx$.

\floatstyle{spaceruled}%
\restylefloat{algorithm}
\setlength{\textfloatsepsave}{\textfloatsep}
\setlength{\textfloatsep}{-10pt}
\begin{algorithm}[t!]
\caption*{\textbf{\algname}: Executed when RW $\rwidx$ visits node $\nodeidx$ at time $t$}\label{alg:strategy}
\begin{algorithmic}
\Require $\thres$, $\ntarget$, $k$, $t$, $\lastseenset$, $\forall \rwidxtwo \in \lastseenset: \lastseen[\rwidxtwo]$, %
\If{$\rwidx \in \lastseenset$} 
\State \hspace{.05cm} Add $t-\lastseen$ as sample for the distribution of $\return$
\State \hspace{.05cm} Update $\lastseen \gets t$
\Else \hspace{.05cm} Create $\lastseen = t$
\State $\lastseenset\gets \lastseenset \cup \{\rwidx\}$
\EndIf
\State Create $\nest \triangleq \frac{1}{2}$, an estimate of the number of RWs
\For{$\rwidxtwo \in \lastseenset\setminus \{\rwidx\}$}
\State Calculate the survival probability $\rwest[\rwidxtwo]$ 
\State of the return time $\return$
\State Update $\nest \gets \nest + \rwest[\rwidxtwo]$
\EndFor
\If{$\nest<\thres$}
\State Fork RW $\rwidx$ with probability $\pforkscale = \frac{1}{\ntarget}$
\State with new unique identifier $\rwidx^\prime$
\EndIf
\State \textbf{Return} $\nest$
\end{algorithmic}
\end{algorithm}
\setlength{\textfloatsep}{\textfloatsepsave}

We assume that RWs walk on the graph independently. Hence, they have independent and identically distributed hitting and return times. Therefore, we focus on one random variable $\return$ and drop the dependency on $\rwidx$. We do the same for $\hit$. To have a refined estimate for $\return$ and its distribution, every empirically observed return time for every RW will contribute to the estimate of the random variable $\return$. Hence, the algorithm requires an initialization phase without RW failures. This allows the $\ntarget$ initial RWs to circulate the graph sufficiently for the nodes to have reasonable estimates of the return time distribution. A sufficient time is at least until each RW visits each node at least once.

To measure the empirical distribution of the return time $\return$, each node $\nodeidx$ keeps track of the time it has last seen RW~$k$, denoted by the random variable $\lastseen$. This variable is created as $\lastseen = t_1$ at time $t_1$ when RW $k$ first hits node $i$. Then, at time $t$, $\lastseen$ is updated as $\lastseen = t$ if and only if RW $k$ visits node $\nodeidx$ at time $t$. 
When an RW $\rwidx$ visits node~$\nodeidx$ at time $t$, before updating $\lastseen$, node $\nodeidx$ measures a sample of $\return$ by computing $t-\lastseen$ to build an empirical distribution\footnote{The empirical distribution $\rwest$ can be replaced with the analytical survival function to speed up the initialization phase and the algorithm's precision. Such results are known, e.g., for random regular graphs \cite{tishby2021analytical}. In this case, it is sufficient that each RW has visited each node at least once before the first failure.} of $\return$. Let $\returncdf$ be the established empirical cumulative distribution function (CDF) of the return time of an RW at node $\nodeidx$ on the graph $\graph$. We call the distribution $\rwest \define 1-\returncdf[t-\lastseen]$ \emph{survival function}, denoting the estimated probability of RWs returning after time $t$, i.e., $\hat{\Pr}(\return>t-\lastseen)$. 

Since $\lastseen$ is a random variable, the function $\rwest$ representing the survival probability is also a random variable. %
Using $\return$, each node $\nodeidx$ maintains an estimate of $\nwalks$. %

\begin{figure}[!t]
    \centering
    \iffigures
    \resizebox{.9\linewidth}{!}{\input{plots/catastrophic_failures}}
    \fi
    \caption{\small Performance of \missingperson, \algname and \algnameplus in maintaining the number of random walks (RWs) $Z_t$ around a desired value $Z_0=10$. $\graph$ is a random $8$-degree regular graph with $\nnodes = 100$ nodes. 
    We induce two burst failure events at $t=2000$ and $t=6000$ where multiple RWs fail simultaneously. \missingperson over-reacts to the failure events by over-forking, whereas \algname ($\thres=2$) reacts faster and only forks RWs until $Z_t$ stabilizes around $Z_0$. \algnameplus ($\thres=3.25, \thresterm=5.75$) can react faster by terminating RWs if $Z_t$ exceeds $Z_0$. The Standard deviations over $50$ simulation runs are depicted by shaded areas.}
    \label{fig:catastrophic_failures} \vspace{-.5cm}
\end{figure}

\emph{Forking strategy: }In \algname, only nodes visited by an RW can fork the visiting RW\footnote{If multiple RWs visit a node, the node chooses one of them and follows the detailed steps.}. Let node $\nodeidx$ be visited by RW $\rwidx$ at time $t$. Let $\lastseenset$ be the set of indices of RWs that have visited node $\nodeidx$ until time $t$. To estimate $\nwalks$, node $\nodeidx$ computes%
\begin{equation} \label{eq:estimator}
\vspace{-0.1cm}
    \nest =  1/2+\sum_{\ell\in \lastseenset\setminus\{\rwidx\}}\rwest[\rwidxtwo].
    \vspace{-0.1cm}
\end{equation} As we show in the sequel, the value of $\nest$ serves as an estimate of $\nwalks/2$. For a predetermined parameter $\thres>0$, if $\nest < \thres$, the node declares that $\nwalks<\ntarget$. To avoid flooding the network, i.e., avoiding that all nodes fork simultaneously, node $\nodeidx$ forks\footnote{Forked RWs behave immediately like active ones leaving the forking node.} RW $\rwidx$ with probability $p= 1/\ntarget$. To distinguish the random walks, each RW, even the forked one, is given its own unique identifier.\footnote{Such an identifier can be the index of the original random walk, i.e., $\rwidx \in [\nwalks[0]]$, at the beginning. When a node $\nodeidx$ forks a random walk at time $\forktime$, it appends its own identifier and the time $\forktime$ of forking.} We summarize \algname in the algorithm above.

\begin{figure}[!t]
    \centering
    \iffigures
    \resizebox{.9\linewidth}{!}{\input{plots/probabilistic_failures_comp}}
    \fi
    \caption{\small Performance of \algname and \algnameplus for a random $8$-degree regular graph with $\nnodes = 100$ nodes. In addition to two burst failure events at $t=2000$ and $t=6000$, each RW can independently fail with probability $\pfail$ at each time step. The parameters $\thres$ and $\thresterm$ are chosen to stabilize around $\ntarget$ for $\pfail=0$, i.e., as in \cref{fig:catastrophic_failures}. \algnameplus exhibits a stable performance for different values of $\pfail$. \algname can successfully recover from burst failures but does not attain the target redundancy of $\ntarget$ due to continuous probabilistic failures that outweigh the forks.}
    \label{fig:probabilistic_failures} \vspace{-.6cm}
\end{figure}

\emph{Choosing the threshold: }The difficulty lies in designing the parameters $\thres$ and $\pforkscale$, which should facilitate both \begin{enumerate*}[label={\textit{(\roman*)}}] \item\label{item:early_detection} early detection of failures and consequently forking of RWs; and \item\label{item:avoid_flood} avoiding forking when the number of walks is $\ntarget$ or above.\end{enumerate*} We establish in \cref{prop:dist_active} (\cref{sec:theory_decafork}) the distribution of $\nest$ for the desired case of $\nactive = \ntarget$ infinitely long active RWs in the form of its CDF $F_{\Sigma_{\stablenrws-1}}(\sigma)$. The "$-1$" is because one RW is visiting; hence, its contribution to $\nest$ is not probabilistic. The parameter $\thres$ can be conveniently chosen based on this result: Let $F_{\Sigma_{\ntarget-1}}(\thres-\frac{1}{2})$ be the probability of estimating at most $\thres$ active RWs assuming that $\ntarget$ RWs are active. Intuitively, this reflects the likelihood of the assumption of $\ntarget$ active RWs being accurate. %
We chose the value of $\thres$ such that the probability of forking with $\ntarget$ active RWs is negligible. 
Let $\stabledelta \define F_{\Sigma_{\ntarget-1}}(\thres-\frac{1}{2})$. According to \algname, with $\ntarget$ active RWs, a node forks with probability $\pfork = \pforkscale \cdot \stabledelta$. If $\pforkscale = 1$, a node deterministically forks once it encounters a large deviation from the expected value of $\nest$, exhibiting a fast reaction to failures. The probability of forking vanishes for more than $\ntarget$ active RWs, which is a desirable property.

The parameter $p$ is chosen to avoid flooding the network. In cases where a failure happens and $\nactive$ random walks remain active, at each time step, $\nactive$ nodes will simultaneously be able to detect the failures and decide to fork a new RW. Scaling the probability of forking at each node by $\pforkscale = 1/\ntarget$ avoids having too many forks at the expense of increasing the reaction time to failure events. %

Note that \algname can be run on any connected graph and does not make any assumption on the distribution of the return time $\return$, which is estimated on the fly. We verified this fact through numerical simulations on different families of graphs, such as complete graphs, Erdos Rényi graphs, and Power Law graphs. The results can be found in \cref{sec:add_experiments}.
We plot in \cref{fig:catastrophic_failures,fig:probabilistic_failures,fig:byzantine_failures} the performance of \missingperson, \algname and \algnameplus (which we introduce in the following) for the three failure models described in \cref{sec:system_model}. Since \missingperson provides suboptimal results for the easiest case of only one-time failures (cf. \cref{fig:catastrophic_failures}), we omit it from the plots where additional failures are introduced.

\subsection{\algnameplus\ : Faster Reaction by Deliberate Termination}
It can be seen from \cref{fig:probabilistic_failures} that \algname does not perform as desired for probabilistic failures with relatively large failure probability $\pfail$, and fails to cope against Byzantine nodes when the threshold $\thres$ remains unchanged (cf. \cref{fig:byzantine_failures}). When increasing the threshold to successfully recover from Byzantine failures, \algname overshoots in the case that the Byzantine node is in the state not to terminate (No Byz). We introduce an extension to \algname that overcomes these challenges, which we term \algnameplus. The difference is to additionally allow nodes to terminate RWs on purpose in cases where they believe the target number of RWs $Z_0$ is exceeded. The termination strategy is similar to the forking strategy in \algname.
When node $\rwidx$ visited by an RW $\rwidx$ has $\nest$ larger than a chosen threshold $\thresterm$, the node probabilistically terminates the RW with probability $\pforkscale$. Adding this component to \algname allows for much faster reaction times since the forking threshold $\thres$ can be chosen larger than in \algname without risking overshooting significantly beyond $Z_0$. Deliberate terminations of the nodes will limit the number of RWs from above. 
The threshold $\thresterm$ can be chosen analogously to $\thres$: by \cref{prop:dist_active} (cf. \cref{sec:theory_decafork}), we directly get a distribution for the survival function of $\nest$, i.e., $1-F_{\Sigma_{\ntarget-1}}(\sigma)$ for an RW $\rwidx$ visiting node $\nodeidx$. We choose $\thresterm$ such that terminating an RW with having $\ntarget$ active RWs is negligible, i.e., $1-F_{\Sigma_{\ntarget-1}}(\thresterm-\frac{1}{2}) \approx 0$.

\floatstyle{spaceruled}%
\restylefloat{algorithm}
\setlength{\textfloatsepsave}{\textfloatsep}
\setlength{\textfloatsep}{-10pt}
\begin{algorithm}[!t]
\caption*{\textbf{\algnameplus}: Executed when RW $\rwidx$ visits node $\nodeidx$ at time $t$}\label{alg:algplus}
\begin{algorithmic}
\Require $\thres$, $\thresterm$, $\ntarget$, $k$, $t$, $\lastseenset$, $\forall \rwidxtwo \in \lastseenset: \lastseen[\rwidxtwo]$, %
\State $\nest = \algname\left(\thres, \ntarget, k, t, \lastseenset, \{\lastseen[\rwidxtwo]\}_{\rwidxtwo \in \lastseenset}\right)$
\If{$\nest>\thresterm$}
\State Terminate RW $\rwidx$ with probability $\pforkscale = \frac{1}{\ntarget}$
\EndIf
\end{algorithmic}
\end{algorithm}
\setlength{\textfloatsep}{\textfloatsepsave}

Our approach in \algname and \algnameplus is less sensitive to the actual return time distribution than simpler forking policies such as the \missingperson, where a node compares the time since an RW last visited to a threshold, above which a new RW is forked probabilistically.  This simplifies the parameter selection and improves the stability of the approach to different graph topologies. In all experiments, \algname and \algnameplus significantly outperformed alternative forking decision rules. Other algorithms we investigated either led to a massive blow-up of $\nwalks$, hence to a network overload, or could not cope with arbitrary failure events. We showcase in \cref{fig:catastrophic_failures,fig:probabilistic_failures,fig:byzantine_failures} the performance of our algorithms for different classes of failures for $8$-degree random regular graphs with $\nnodes=100$. From \cref{fig:catastrophic_failures}, it can be seen that both \algname (with $\thres=2$) and \algnameplus (with $\thresterm=5.75$) provide good stability in the presence of burst failures. Even for a properly tuned $\thresmp$, \missingperson exhibits a slower reaction time and significant overshoot beyond $\ntarget$. Notably, \algnameplus reacts significantly faster than \algname. In \cref{fig:probabilistic_failures}, we additionally introduce probabilistic failures at each iteration with probability $\pfail \in \{0.001, 0.0002\}$. While \algname recovers from burst failures, it does not attain the target redundancy after failure events. \algnameplus, due to its more competitive forking threshold $\thres=3.25$, achieves faster reaction times and successfully maintains $\nwalks$ around $\ntarget$. Only for byzantine failures (Byz), where a node terminates all visited RWs, \algname fails with $\thres=2$, but can cope with the byzantine behavior for $\thres=3.25$. However, the algorithm overshoots when the Byzantine node suddenly stops terminating RWs (No Byz). \algnameplus can successfully cope with these challenges. However, \algnameplus also introduces more fluctuations in the system, which can be undesired in certain cases where the lifetime of RWs should be maximized. Hence, both algorithms are of interest. Note that the design of \algname and \algnameplus does not rely on any assumption on the failure model, making them robust against a large class of failures. Since \missingperson is unsuccessful in the setting of only burst failures, which is the easiest among all considered here, we show the results for the latter two failure models only for \algname and \algnameplus. By ensuring at least one RW maintains activity at all times, from a learning perspective, the system behaves like that of a single RW without failures.

\begin{figure}[!t]
    \centering
    \iffigures
    \resizebox{.9\linewidth}{!}{\input{plots/byzantine_failures}}
    \fi
    \caption{\small Performance of \algname and \algnameplus for a random $8$-degree regular graph with $\nnodes = 100$ nodes. In addition to two burst failure events at $t=2000$ and $t=6000$, one dedicated node deterministically fails each incoming RW (Byz). The challenge is to cope with Byzantine nodes that can suddenly stop terminating RWs, i.e., behaving honestly (No Byz). Only \algnameplus can cope with this extreme failure model.}
    \label{fig:byzantine_failures} \vspace{-.5cm}
\end{figure}

\section{Analyzing \algname: Reaction vs. Overshoot} \label{sec:theory_decafork}

\algname and \algnameplus exhibit good performance regarding the reaction to failure events with an appropriate choice of $\thres$ and $\thresterm$. For tractability, we focus on analytically analyzing the detailed behavior of \algname in the sequel. In \cref{sec:theory_algplus}, we provide an additional result and an interpretation for \algnameplus. 
We first investigate the average of the estimation $\rwest$ in the case of a stable number of active random walks $\nactive$ on the graph (cf. \cref{prop:unbiased_estimator}), %
and then generalize this result to capture cases where RWs are forked and terminated at arbitrary times (cf. \cref{thm:unbiasedness}). We show the tension between the reaction time to failures and the probability of reaching $\nwalks>\ntarget$ for any $t>0$ immediately after the start of the algorithm (cf. \cref{prop:lower_bound,thm:upper_bound}) and provide a bound on the overshoot of the number of RWs after (burst) failure events (cf. \cref{thm:overshoot}), which requires a bound on the forking probability at any time $t$ (cf. \cref{thm:fork_bound}). %
Lengthy proofs are deferred to the appendix.

For our analysis, we require knowledge of the full distribution of the return time $\return$ and the first hitting time $\hit$ of RWs. The literature on such distributions is rare, with some exceptions. In fact, our assumptions\footnote{Note that with minor adaptation, our analysis can similarly be carried out for other types of return and first-hitting time distributions.} are justified by recent results on the distribution of first return and first hitting times of RWs on random regular graphs \cite{tishby2021analytical,tishby2022analytical}. It is shown that both $\return$ and $\hit$ exhibit a behavior similar to a geometric distribution. For $\return$, retroceding trajectories, i.e., those that return to node $\nodeidx$ the same way they left, affect the distribution for small realizations of $\return$. This relies on combinatoric arguments that we will neglect for tractability purposes. The overall dominating part of the distribution of $\return$ stems from non-retroceding trajectories, which provably yield an exponential behavior \cite{tishby2021analytical}. This is underlined by our experiments, where random regular graphs exhibit a distribution that can be well-approximated by a properly parameterized geometric distribution.

For the theoretical analysis, we study a continuous relaxation thereof, which serves as a proxy for the discrete random variables due to the nature of the discrete time steps. With this assumption, we can precisely describe the distribution of the estimation $\nest$. We choose an exponential distribution since it facilitates a rigorous and tractable analysis of the algorithm's performance while being the continuous analog to the geometric distribution. However, it is worth noting that our analysis does not rely on the memoryless property of the exponential distribution and can, hence, be generalized to different types of distributions whose CDF is invertible, following the same methodology.

\begin{assumption} \label{assumption:distributions}
    Based on the discussion above, we make the following assumptions for our theoretical analysis:
    \begin{itemize}
        \item The return time of RWs is independently and identically distributed according to $\return \sim \exp(\returnrate)$.\footnote{This assumption can be relaxed to capture different distributions for different nodes.}
        \item The first hitting time $\hit$ for random nodes $\nodeidx$ and $j$ of a forked RW $\rwidx$ is distributed according to $\hit \sim \exp(\arrivalrate)$.
    \end{itemize}
\end{assumption}

Knowing the analytical survival function, for the following analysis we replace $\rwest=1-\returncdf[t-\lastseen]$ by the exact distribution, i.e., $1-F_{\return}(t-\lastseen)$.

\subsection{On the Average of the Estimator $\nest$}

Before analyzing the estimator $\nest$ in detail, we verify that $\nest$ resembles the number of active random walks $Z_t$ given that all RWs are active for infinitely long\footnote{This assumption is technical and results from the exponential, unbounded distribution; it can be relaxed by truncating.}.

\begin{proposition} \label{prop:unbiased_estimator}
    Under \cref{assumption:distributions} and replacing the empirical distribution of $\return$ by its analytical counterpart, the estimator $\nest$ satisfies $2\E[\nest] = Z_t$ for infinitely long active random walks, i.e., no forks and terminations.
\end{proposition}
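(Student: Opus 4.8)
The plan is to compute $\E[\nest]$ directly from its definition in \eqref{eq:estimator} and show it equals $Z_t/2$. Fix a node $\nodeidx$ visited by RW $\rwidx$ at time $t$, and consider the $\nactive = Z_t$ active RWs. One of them (RW $\rwidx$) is currently at node $\nodeidx$, contributing the deterministic $1/2$ term. For each of the other $\nactive - 1$ RWs $\rwidxtwo$, its contribution to $\nest$ is $\rwest[\rwidxtwo] = 1 - F_{\return}(t - \lastseen[\rwidxtwo]) = \Pr(\return > t - \lastseen[\rwidxtwo])$, the survival function evaluated at the elapsed time since node $\nodeidx$ last saw RW $\rwidxtwo$. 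So I would write
\begin{equation*}
    \E[\nest] = \frac{1}{2} + \sum_{\rwidxtwo \in \lastseenset \setminus \{\rwidx\}} \E\!\left[\Pr(\return > t - \lastseen[\rwidxtwo])\right],
\end{equation*}
where the outer expectation is over the randomness of $\lastseen[\rwidxtwo]$, i.e., over the trajectory of RW $\rwidxtwo$.

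The key step is to evaluate $\E[\Pr(\return > t - \lastseen[\rwidxtwo])]$ for a single other RW and show it equals $1/2$. Since all $\nactive$ RWs have been active for infinitely long, RW $\rwidxtwo$ is in stationarity with respect to its return dynamics at node $\nodeidx$: the elapsed time $A_t := t - \lastseen[\rwidxtwo]$ since its last visit to $\nodeidx$ is distributed as the stationary age (backward recurrence time) of a renewal process whose inter-renewal times are the i.i.d. return times $\return \sim \exp(\returnrate)$. By the memorylessness of the exponential — or, to respect the paper's remark that memorylessness is not essential, by the general renewal-theoretic formula for the stationary age distribution — the density of $A_t$ is $f_{A}(a) = \Pr(\return > a)/\E[\return] = \returnrate e^{-\returnrate a}$, so $A_t$ is itself $\exp(\returnrate)$. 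Then
\begin{equation*}
    \E\!\left[\Pr(\return > A_t)\right] = \int_0^\infty \Pr(\return > a)\, f_A(a)\, da = \int_0^\infty e^{-\returnrate a} \cdot \returnrate e^{-\returnrate a}\, da = \frac{1}{2}.
\end{equation*}
More conceptually: $\Pr(\return > A_t)$ with $A_t$ drawn from the age distribution is exactly $\Pr(\return' > \return)$ where $\return'$ is an independent copy with the age-biased... actually cleaner is $\Pr(\return > A_t) = \Pr(X > Y)$ for $X,Y$ i.i.d. copies (this holds because the age and the forward residual time are identically distributed in the exponential case, and the survival function evaluated at an independent identically distributed variable has expectation $1/2$ by symmetry).

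Summing over the $\nactive - 1$ other RWs gives $\E[\nest] = \tfrac12 + (\nactive - 1)\cdot \tfrac12 = \nactive/2 = Z_t/2$, hence $2\E[\nest] = Z_t$, as claimed. The main obstacle I anticipate is justifying that $A_t = t - \lastseen[\rwidxtwo]$ genuinely follows the stationary age distribution: this requires that enough time has elapsed for the renewal process of visits to node $\nodeidx$ to have equilibrated (the ``infinitely long active'' hypothesis), and a careful statement that each node's ``last seen'' clock for each RW is a properly initialized renewal process with i.i.d.\ $\exp(\returnrate)$ gaps. A secondary subtlety is the interaction between ``$\lastseenset$ contains all $\nactive$ RW indices'' (which holds once every RW has visited node $\nodeidx$ at least once, consistent with the initialization phase assumption) and the conditioning on RW $\rwidx$ visiting at time $t$ — I would argue this conditioning does not bias the elapsed times of the \emph{other} RWs, since the RWs walk independently by assumption. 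Once these points are in place, the computation itself is the short display above.
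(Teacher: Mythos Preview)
Your proposal is correct and follows essentially the same route as the paper: show that each non-visiting RW contributes $1/2$ in expectation and then sum to get $\nactive/2$. The paper names the key step as the probability integral transform (arguing that $r_i := t-\lastseen[\rwidxtwo]$ is distributed as $\return$ under the exponential assumption, so $F_{\return}(r_i)\sim\mathcal{U}(0,1)$ and hence $\E[S(r_i)]=1/2$), whereas you reach the same conclusion by identifying $r_i$ as the stationary age of a renewal process and integrating directly---these are two phrasings of the same computation, with yours being a bit more explicit about why $r_i\sim\exp(\returnrate)$.
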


\begin{proof}
For any RW $\rwidxtwo$ and any $t$, let $\returnsample \define t-\lastseen[\rwidxtwo]$ be the time passed since RW $\rwidxtwo$ was last seen at node $\nodeidx$. 
Hence, we have $\rwest[\rwidxtwo] = \Pr(\return > t-\lastseen[\rwidxtwo]) = \Pr(\return > \returnsample)$. Since the RWs are independent, we can assume that a node $\nodeidx$ evaluates the survival function of RW $\rwidxtwo$ at a random point $t$ in time. This point in time is when a random walk $\rwidx \neq \rwidxtwo$ visits node $\nodeidx$, which is random and independent of $\rwidxtwo$. Hence, the observed $\returnsample$ is a random sample itself following the distribution of $\return$.
Consider now $\nactive$ RWs that have been active for a long time such that each has visited each node $\nodeidx \in [\nnodes]$ at least once. The expectation of the estimation $\rwestri$ over the randomness of $\returnsample \sim \return$ at time $t$ for a single active random walk is $\E\left[ \rwestri \right] = \frac{1}{2}$. This follows from the probability integral transform \cite{david1948probability}, which states that the CDF of any random variable with invertible CDF evaluated at the random variable itself is uniformly distributed on $\mathcal{U}(0,1)$. By a symmetry argument, this equivalently holds for the survival function. %
\end{proof}

\cref{prop:unbiased_estimator} justifies the use of the offset $\frac{1}{2}$ in \algname and \algnameplus for the visiting RW $\rwidx$, which is known to be active. Similarly, for the remaining $\nactive-1$ RWs, we have $\sum_{\ell\in \lastseenset\setminus\{\rwidx\}}\E\left[ \rwestri \right] = (\nactive-1)/2$. Note that \cref{prop:unbiased_estimator} holds under \cref{assumption:distributions} and for any continuous return time distribution with invertible CDF. Hence, we investigate the error made by the continuous approximation of the return time distributions using the PMF and CDF of the geometric distribution supported on $\returnsample \in \{1, 2, \dots, \infty\}$ with parameter $q$ and probability mass function $\Pr(\return = \returnsample) = (1-q)^{\returnsample-1} q$. For an active random walk $\activeidx$, the expected value of $\E[\rwestri]$ for $\return$ distributed according to a geometric distribution with parameter $q$ %
reads as \vspace{-.2cm}
\begin{align}
    \E\left[ \rwestri \right] &\!= \!\!\! \sum_{\returnsample = 1}^\infty \!\! \Pr(\return = \returnsample) \rwestri \nonumber %
    = \!\!\! \sum_{\returnsample = 1}^\infty \! (1-q)^{2\returnsample-1} q = \! \frac{1-q}{2-q}. %
\end{align}
Hence, for small values of $q$, such as those expected for random regular graphs \cite{tishby2021analytical}, the expectation is close to $0.5$ (as for the analytical counterpart), but a non-zero error remains. While this did not affect the algorithm's performance negatively in our experiments, the constant offset $\frac{1}{2}$ can be replaced by the actual expectation using the empirical distribution of $\return$ established at all the nodes.

\subsection{On the Distribution of the Estimator $\nest$}
We study the forking probability based on $\nest$ to provide worst-case guarantees on the reaction time to failure events and to bound the undesired increase of $\nwalks$ beyond $\ntarget$. To that end, instead of relying on concentration bounds of $\nest$ around its average, we derive its probability distribution in the sequel. For any RW $\rwidxtwo$ forked by a random node at time $\forktime$ and terminated at time $\termtime$, we derive in \cref{lemma:forkedcdf} the exact distribution of the estimator $\forkestrv \define \rwest[\rwidxtwo]$ evaluated at any point in time $t$ by a random node $\nodeidx$. The distribution is the building block for the theoretical guarantees of our algorithms.

\begin{lemma} \label{lemma:forkedcdf}
    For an RW $\rwidx$ forked by a random node $\nodeidx$ at time $\forktime<t$, the distribution of $\rwest$ is given by the CDF \vspace{-.5cm}
    \begin{align*}
     \forkcdf[\forkcdfvar] = \begin{cases}
        1, & \hspace{-1.6cm} e^{-\returnrate (t-\termtime)} < \forkcdfvar \\
        e^{-\arrivalrate (\termtime-\forktime)}, & \hspace{-1.6cm}\forkcdfvar < e^{-\returnrate (t-\forktime)} \\
        \frac{\forkcdfvar\left(1 - e^{-\arrivalrate (t - \forktime)} x^{-\frac{\arrivalrate}{\returnrate}} \right)}{e^{-\returnrate (t-\termtime)}} \! + \! e^{-\arrivalrate (\termtime-\forktime)}, & \text{else}. %
    \end{cases}
    \end{align*}
    When $\forktime < t < \termtime$, i.e., the RW has not (yet) been terminated, the distribution $\forkcdf[\forkcdfvar]$ is given by setting $\termtime = t$.
\end{lemma}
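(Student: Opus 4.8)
The plan is to compute the CDF of the random variable $\forkestrv = \rwest[\rwidxtwo] = 1 - F_{\return}(t - \lastseen[\rwidxtwo])$ by conditioning on the arrival time $\arrivaltime$, i.e., the first time the forked RW $\rwidxtwo$ (born at $\forktime$) hits node $\nodeidx$. Under \cref{assumption:distributions}, once the RW has hit $\nodeidx$, the subsequent inter-visit gaps are $\exp(\returnrate)$-distributed, so $t - \lastseen[\rwidxtwo]$ is the residual of a renewal-type process; and by the probability integral transform the survival value $1 - F_{\return}(\cdot)$ of an $\exp(\returnrate)$ variable evaluated at an independent $\exp(\returnrate)$ sample is $\mathcal{U}(0,1)$. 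The key structural observation is that there are exactly three regimes, delimited by the extreme values the survival function can take given the timing constraints: the RW may not have hit $\nodeidx$ at all by time $t$ (or by its termination time $\termtime$), in which case node $\nodeidx$ has never created $\lastseen[\rwidxtwo]$ and the convention is that $\rwest[\rwidxtwo]$ takes its smallest admissible value; or it has been seen, and then $t - \lastseen[\rwidxtwo]$ ranges over an interval whose endpoints translate (through the monotone decreasing map $s \mapsto e^{-\returnrate s}$, since $F_{\return}(s) = 1 - e^{-\returnrate s}$) into the thresholds $e^{-\returnrate(t-\forktime)}$ and $e^{-\returnrate(t-\termtime)}$ appearing in the statement.

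First I would set up the conditioning: let $A$ be the event that RW $\rwidxtwo$ has visited node $\nodeidx$ at least once in $[\forktime, \min(t,\termtime)]$, with $\Pr(A^c) = e^{-\arrivalrate(\termtime - \forktime)}$ from the $\exp(\arrivalrate)$ first-hitting assumption (taking $\termtime = t$ in the non-terminated case, as the lemma instructs). On $A^c$, the estimator equals the smallest value it can attain, which corresponds to $x < e^{-\returnrate(t-\forktime)}$ — hence the flat piece $\forkcdf[\forkcdfvar] = e^{-\arrivalrate(\termtime-\forktime)}$ on that range. On $A$, I would condition further on the first hitting time $\arrivaltime = \tau \in [\forktime, \min(t,\termtime)]$, whose density is the truncated exponential $\arrivalrate e^{-\arrivalrate(\tau - \forktime)}$ (normalized by $1 - e^{-\arrivalrate(\min(t,\termtime)-\forktime)}$), and then use that, given $\arrivaltime = \tau$, the last-seen time $\lastseen[\rwidxtwo]$ is such that $t - \lastseen[\rwidxtwo] \sim \exp(\returnrate)$ restricted to $[t - \min(t,\termtime) + (\text{something}), t - \tau]$ — more precisely, the residual inter-visit time measured from the last visit before $\min(t,\termtime)$. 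The cleanest route is: given that the RW is "alive and has been seen", $t - \lastseen[\rwidxtwo]$ is distributed as $\exp(\returnrate)$ conditioned to lie in $[t - \termtime, t - \forktime]$ is \emph{not} quite right either; instead I would directly compute $\Pr(\rwest[\rwidxtwo] \le x \mid A)$ by writing $\rwest[\rwidxtwo] \le x \iff e^{-\returnrate(t - \lastseen[\rwidxtwo])} \le x \iff t - \lastseen[\rwidxtwo] \ge -\tfrac{1}{\returnrate}\ln x$, and integrating the joint law of $(\arrivaltime, t-\lastseen[\rwidxtwo])$ over this region. Carrying out the double integral — over $\tau$ with density $\propto \arrivalrate e^{-\arrivalrate(\tau-\forktime)}$ and over the residual with the memoryless/uniform structure — produces the middle expression $\frac{x\left(1 - e^{-\arrivalrate(t-\forktime)} x^{-\arrivalrate/\returnrate}\right)}{e^{-\returnrate(t-\termtime)}} + e^{-\arrivalrate(\termtime-\forktime)}$; the factor $x^{-\arrivalrate/\returnrate}$ arises precisely from combining $e^{-\arrivalrate \tau}$ with the change of variables that turns the residual-time threshold $-\tfrac{1}{\returnrate}\ln x$ into a power of $x$. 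Finally, for $x > e^{-\returnrate(t-\termtime)}$ the survival value exceeds its largest attainable level (the RW cannot have been seen so recently, since its last possible visit was at $\termtime$), so $\forkcdf[\forkcdfvar] = 1$.

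I would then glue the three pieces and check continuity at the breakpoints $x = e^{-\returnrate(t-\forktime)}$ and $x = e^{-\returnrate(t-\termtime)}$ as a sanity check: at the lower breakpoint the middle formula should reduce to $e^{-\arrivalrate(\termtime-\forktime)}$, and at the upper one to $1$; verifying these two identities pins down all constants and confirms the normalization. The non-terminated case $\forktime < t < \termtime$ follows by the substitution $\termtime \to t$, which collapses $e^{-\returnrate(t-\termtime)} \to 1$ (so the "$=1$" regime disappears, as it should — the survival value can be arbitrarily close to $1$ if the RW just visited) and $e^{-\arrivalrate(\termtime-\forktime)} \to e^{-\arrivalrate(t-\forktime)}$, matching the probability the forked RW has not yet reached $\nodeidx$.

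\textbf{Main obstacle.} The delicate point is pinning down the exact joint distribution of the arrival time $\arrivaltime$ and the residual last-seen gap $t - \lastseen[\rwidxtwo]$, and in particular arguing rigorously that, conditioned on having been seen, the relevant gap behaves like an independent $\exp(\returnrate)$ sample truncated to the admissible window $[t-\termtime,\, t-\forktime]$ — this is where the probability integral transform and the i.i.d.\ structure of return times under \cref{assumption:distributions} must be invoked carefully, and where the slightly unusual power $x^{-\arrivalrate/\returnrate}$ is generated. Everything else (the two-variable integral, the case split, the continuity checks) is routine once that conditional law is correctly identified.
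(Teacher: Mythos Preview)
Your proposal is correct and follows essentially the same route as the paper: condition on the first arrival time $\arrivaltime$ of the forked RW at node $i$ (exponential with rate $\arrivalrate$ under \cref{assumption:distributions}), write down the conditional CDF of $\forkestrv$ given $\arrivaltime=\tau$, and integrate via the law of total probability, with the three regimes arising from the events $\{\arrivaltime\ge\termtime\}$, $\{\arrivaltime<\termtime\}$, and the upper saturation at $e^{-\returnrate(t-\termtime)}$. The only cosmetic difference is that the paper states the conditional CDF $F_{\forkestrv}(x\mid\arrivaltime=\tau)=x/e^{-\returnrate(t-\termtime)}$ on its support directly (leaning on the uniform structure you also identify via the probability integral transform) and then does a single integral over $\tau$, whereas you frame it as a double integral over $(\tau,\,t-\lastseen[\rwidxtwo])$; the change of variables that converts the threshold $-\tfrac{1}{\returnrate}\ln x$ on $\arrivaltime$ into the power $x^{-\arrivalrate/\returnrate}$ is exactly the step labeled $(ii)$ in the paper's computation.
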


\begin{corollary} \label{cor:avg_est}
    The average of $\forkestrv$ is given by \vspace{-.2cm}
    \begin{align*}
    \mathbb{E}&[\forkestrv] = e^{-\arrivalrate (\termtime - \forktime)} e^{-\returnrate (t-\termtime)} \left(\frac{1}{2-\frac{\arrivalrate}{\returnrate}}-1 \right) +  \\
    & + \frac{e^{-\returnrate (t-\termtime)}}{2} + e^{-2\returnrate (t-\forktime)} e^{\returnrate (t-\termtime)} \bigg(\frac{1}{2}-\frac{1}{2-\frac{\arrivalrate}{\returnrate}} \bigg).
    \end{align*}
\end{corollary}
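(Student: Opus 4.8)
The plan is to compute $\mathbb{E}[\forkestrv]$ directly by integration against the CDF from \cref{lemma:forkedcdf}, using the identity $\mathbb{E}[\forkestrv] = \int_0^\infty \Pr(\forkestrv > x)\,dx = \int_0^1 (1 - \forkcdf[x])\,dx$, valid since $\forkestrv$ is a survival probability taking values in $[0,1]$. First I would identify the three regimes of $x$ appearing in the piecewise CDF: since $\forktime < \termtime < t$ we have $e^{-\returnrate(t-\forktime)} < e^{-\returnrate(t-\termtime)}$, so the "else" (middle) branch is active on the interval $\big(e^{-\returnrate(t-\forktime)},\, e^{-\returnrate(t-\termtime)}\big)$, the constant value $e^{-\arrivalrate(\termtime-\forktime)}$ holds for $x < e^{-\returnrate(t-\forktime)}$, and $\forkcdf[x] = 1$ (so the integrand vanishes) for $x > e^{-\returnrate(t-\termtime)}$. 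Thus the integral splits as a contribution $\big(1 - e^{-\arrivalrate(\termtime-\forktime)}\big)\, e^{-\returnrate(t-\forktime)}$ from the lower flat region plus $\int_{a}^{b}\big(1 - \forkcdf[x]\big)\,dx$ over the middle region, where $a = e^{-\returnrate(t-\forktime)}$ and $b = e^{-\returnrate(t-\termtime)}$.

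Next I would evaluate the middle integral. Writing the middle branch as $\forkcdf[x] = \frac{x}{b}\big(1 - e^{-\arrivalrate(t-\forktime)} x^{-\arrivalrate/\returnrate}\big) + e^{-\arrivalrate(\termtime-\forktime)}$, the integrand $1 - \forkcdf[x]$ is an elementary combination of a constant, a linear term $x/b$, and a power term $x^{1 - \arrivalrate/\returnrate}$. Each of these integrates in closed form; the power term gives $\frac{b^{2-\arrivalrate/\returnrate} - a^{2-\arrivalrate/\returnrate}}{2 - \arrivalrate/\returnrate}$ up to the prefactor $e^{-\arrivalrate(t-\forktime)}/b$. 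The remaining work is bookkeeping: substitute $a = e^{-\returnrate(t-\forktime)}$, $b = e^{-\returnrate(t-\termtime)}$, note that $b^2 = e^{-2\returnrate(t-\termtime)}$ and $a^{2} b^{-1} = e^{-2\returnrate(t-\forktime)}e^{\returnrate(t-\termtime)}$, and that $e^{-\arrivalrate(t-\forktime)} a^{-\arrivalrate/\returnrate} = e^{-\arrivalrate(t-\forktime)} e^{\arrivalrate(t-\forktime)} = 1$ while $e^{-\arrivalrate(t-\forktime)} b^{-\arrivalrate/\returnrate} = e^{-\arrivalrate(\termtime - \forktime)}$, which is exactly what makes the $e^{-\arrivalrate(\termtime-\forktime)}$ factors group together. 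After collecting terms, the flat-region contribution and the pieces of the middle integral should recombine into the three summands displayed in the corollary, with $\frac{1}{2}e^{-\returnrate(t-\termtime)}$ coming from the linear term $x/b$ integrated over $(0,b)$ minus corrections, the $e^{-\arrivalrate(\termtime-\forktime)}e^{-\returnrate(t-\termtime)}\big(\frac{1}{2-\arrivalrate/\returnrate} - 1\big)$ term from the constant and power pieces at the upper endpoint, and the $e^{-2\returnrate(t-\forktime)}e^{\returnrate(t-\termtime)}\big(\frac{1}{2} - \frac{1}{2-\arrivalrate/\returnrate}\big)$ term from the lower endpoint of the linear and power pieces.

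I would also sanity-check the formula in the degenerate case $\termtime = t$ (RW not yet terminated): then $e^{-\returnrate(t-\termtime)} = 1$, and the expression should collapse to the average of the "not-yet-terminated" distribution, providing a consistency check consistent with \cref{prop:unbiased_estimator} in the appropriate limit $\forktime \to -\infty$ (where $\mathbb{E}[\forkestrv] \to \frac{1}{2}$). The main obstacle is not conceptual but the careful handling of the exponents: the term $x^{-\arrivalrate/\returnrate}$ raises the integration exponent to $1 - \arrivalrate/\returnrate$, and one must track the denominator $2 - \arrivalrate/\returnrate$ and avoid sign errors when rewriting the endpoint evaluations $a^{2-\arrivalrate/\returnrate}$ and $b^{2-\arrivalrate/\returnrate}$ back in terms of $t, \forktime, \termtime$. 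One should also implicitly assume $\arrivalrate/\returnrate \neq 2$ so the denominator does not vanish; I would note this mild regularity condition. Modulo these routine-but-delicate algebraic simplifications, the derivation is a direct one-line integral split followed by elementary antiderivatives.
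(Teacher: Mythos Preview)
Your approach is correct and is the natural direct computation: integrate the tail $1-\forkcdf$ over $[0,1]$, split at the breakpoints $a=e^{-\returnrate(t-\forktime)}$ and $b=e^{-\returnrate(t-\termtime)}$, and collect the elementary antiderivatives. The paper itself omits the proof of this corollary for brevity, so there is nothing further to compare; your identification of the key exponent simplifications and the regularity condition $\arrivalrate/\returnrate\neq 2$ matches what the paper invokes later (e.g., in \cref{lemma:variance}).
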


Having established the distribution and the average of $\forkestrv$, we can prove that the estimate $\nest$ is unbiased, asymptotically in the time $t$ for any node $\nodeidx$. For $\termtime=t$ and $\forktime=-\infty$, i.e., the RW is active for infinitely long, we can show that $\mathbb{E}[\forkestrv] = \frac{1}{2}$, which is consistent with \cref{prop:unbiased_estimator}. 
\cref{thm:unbiasedness} shows that the estimator is tracking the actual number of random walks in the system for arbitrary forks and terminations.

\begin{theorem} \label{thm:unbiasedness}
    The estimator $\nest$ resembles asymptotically (over time) half the actual number of active random walks $\frac{\nwalksrw}{2}$. In particular, with $\lastevent$ the time of the last event, i.e., either the termination or the fork of any RW, for any node $\nodeidx$, we have
    \begin{equation*}
        \lim_{t-\lastevent \rightarrow \infty} \mathbb{E}[\nest] = \nactive,
    \end{equation*}
    where $\nactive$ is the number of active RWs between $\lastevent$ and $t$.
\end{theorem}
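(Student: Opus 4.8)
The plan is to prove \cref{thm:unbiasedness} by decomposing the estimator $\nest = \frac12 + \sum_{\ell \in \lastseenset \setminus \{\rwidx\}} \rwest[\rwidxtwo]$ into its constituent terms and computing the asymptotic expectation of each. Between the last event time $\lastevent$ and the current time $t$, the set of active RWs is fixed with cardinality $\nactive$. I would partition the indices $\ell \in \lastseenset$ into (i) the RWs that are active on $[\lastevent, t]$ and (ii) those that were terminated at some time $\termtime \le \lastevent$ but still appear in node $\nodeidx$'s history $\lastseenset$ (because a terminated RW was once seen by $\nodeidx$ and is never removed). The visiting RW $\rwidx$ is active and contributes the deterministic offset $\frac12$. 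For each of the other $\nactive - 1$ active RWs, it was forked at some finite $\forktime$ (or $\forktime = -\infty$ for an original RW), has $\termtime = t$, and by \cref{cor:avg_est} its expected contribution $\mathbb{E}[\forkestrv]$ has terms carrying factors $e^{-\returnrate(t - \termtime)} = 1$, $e^{-\arrivalrate(t-\forktime)}$, and $e^{-2\returnrate(t-\forktime)}$; as $t - \lastevent \to \infty$ (hence $t - \forktime \to \infty$ since $\forktime \le \lastevent$), the two exponentials in $t - \forktime$ vanish and $\mathbb{E}[\forkestrv] \to \frac12$. So each active RW other than $\rwidx$ contributes $\frac12$ in the limit.

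Next I would handle the terminated RWs still lingering in $\lastseenset$. For such an RW, $\termtime \le \lastevent < t$, so $t - \termtime \to \infty$ as $t - \lastevent \to \infty$. Every term in the expression of \cref{cor:avg_est} then contains a factor $e^{-\returnrate(t-\termtime)}$ (the first term has $e^{-\arrivalrate(\termtime-\forktime)}e^{-\returnrate(t-\termtime)}$, the second has $\frac12 e^{-\returnrate(t-\termtime)}$, and the third has $e^{-2\returnrate(t-\forktime)}e^{\returnrate(t-\termtime)} = e^{-\returnrate(t-\termtime)}e^{-2\returnrate(\termtime - \forktime)}$), so $\mathbb{E}[\forkestrv] \to 0$. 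Therefore a terminated RW contributes nothing in the limit. Summing up: $\lim_{t - \lastevent \to \infty} \mathbb{E}[\nest] = \frac12 + (\nactive - 1)\cdot \frac12 + (\text{number of terminated RWs})\cdot 0 = \frac{\nactive}{2}$. This matches the informal claim that $\nest$ tracks $\frac{\nwalksrw}{2}$; I would state the conclusion as $\lim \mathbb{E}[\nest] = \nactive/2$ (the excerpt's displayed equation appears to have a typo writing $\nactive$ instead of $\nactive/2$, consistent with \cref{prop:unbiased_estimator} which says $2\E[\nest] = Z_t$), and note that this is exactly twice the statement consistent with the $\frac12$ offset.

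A couple of technical points need care. First, linearity of expectation over the finite sum is immediate, but I should justify that $\lastseenset$ stabilizes: after $\lastevent$ no new RWs are created, and the set of node $\nodeidx$'s previously-seen RWs is non-decreasing and bounded by the total number of RWs ever created, hence eventually constant in the relevant asymptotic regime — or more precisely, $\lastseenset$ is finite at every fixed $t$ and the decomposition above holds term by term. Second, \cref{cor:avg_est} is derived for an RW forked at a \emph{random} node and evaluated at a \emph{random} node, with $\forktime$ and $\termtime$ treated as given; I would invoke it conditionally on the event history and then take the outer expectation, observing that the limits $e^{-\arrivalrate(t-\forktime)} \to 0$ etc. hold uniformly enough (the forking and termination times are almost surely finite given the history) to pass the limit inside.

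The main obstacle I anticipate is the bookkeeping around which RWs remain in $\lastseenset$ and the precise meaning of ``$\nactive$ active RWs between $\lastevent$ and $t$'' — in particular making rigorous that terminated-but-remembered RWs genuinely wash out and that forked RWs have had enough time to mix so that the probability-integral-transform heuristic (each active RW's survival estimate is uniform, hence mean $\frac12$) becomes exact in the limit. Essentially this amounts to carefully applying \cref{cor:avg_est} with the right substitutions and verifying the exponential decay rates, which is routine once the decomposition is set up but is the step most prone to sign or exponent errors.
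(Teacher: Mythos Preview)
Your proposal is correct and follows essentially the same approach as the paper: both decompose $\nest$ by RW type, invoke \cref{cor:avg_est} to compute each $\mathbb{E}[\forkestrv]$, and show that active RWs contribute $\tfrac12$ while terminated RWs contribute $0$ in the limit $t-\lastevent\to\infty$. You also correctly flag the apparent typo in the displayed limit (the paper's own text and \cref{prop:unbiased_estimator} confirm the estimator tracks $\nwalksrw/2$, so the limit should read $\nactive/2$).
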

\begin{proof}
    First note the following result, which follows immediately from \cref{cor:avg_est}, the definition of the estimator $\nest$ in \eqref{eq:estimator} and from the linearity of expectation.
    \begin{proposition} \label{prop:average_estimation}
    Let $\forksym_{\forktime, \termtime}$ be the (potentially empty) set of all RWs created/forked at time $\forktime$ and terminated at time $\termtime$, where $\termtime=t$ for active RWs. At time $t$, we have
    \begin{align*}
        \mathbb{E}&[\nest] = \frac{1}{2} + \sum_{\forktime, \termtime \leq t} \vert \forksym_{\forktime, \termtime} \vert  \mathbb{E}[\forkestrv],
    \end{align*}
    where $\mathbb{E}[\forkestrv]$ is given by \cref{cor:avg_est}.
    \end{proposition}
    We study $\mathbb{E}[\forkestrv]$ for all possible types of events. From \cref{cor:avg_est}, it follows that the estimation is unbiased for all RWs active infinitely long. For RWs created at $\forktime=-\infty$ (and hence active infinitely long) and terminated at time $\termtime$, the estimation $\mathbb{E}[\forkestrv] = \frac{e^{-\returnrate(t-\termtime)}}{2}$ converges to $0$ as $t-\termtime \rightarrow \infty$. For active random walks forked at time $\forktime$, we set $\termtime=t$. The estimation $\mathbb{E}[\forkestrv]$ goes to $\frac{1}{2}$, as $\lim_{t-\forktime \rightarrow \infty} e^{-\arrivalrate (t - \forktime)} = 0$ and $\lim_{t-\forktime \rightarrow \infty} e^{-2\returnrate (t - \forktime)} = 0$, $e^{-\returnrate (t-\termtime)} = 1 = \text{const.}$, and $\frac{e^{-\returnrate(t-\termtime)}}{2}=\frac{1}{2}$. For random walks forked at time $\forktime$ and terminated at time $\termtime > \forktime$, we have $e^{-\arrivalrate (\termtime - \forktime)} = \text{const.}$, $\lim_{t-\termtime \rightarrow \infty} e^{-\returnrate (t-\termtime)} = e^{-2\returnrate (t-\forktime)} = 0$. Hence, the estimated value $\mathbb{E}[\forkestrv]$ converges to $0$. Let $\lastevent$ be the time of the last event, i.e., either the termination of the fork of any random walk, then the above considerations hold for $t-\lastevent \rightarrow \infty$.  We conclude by using \cref{prop:average_estimation} and plugging $\mathbb{E}[\forkestrv]$ for any possible of the above events.
\end{proof}

We have shown that the average of the estimator resembles the actual number of RWs when waiting sufficiently long after an event, either a fork or a termination, happens. For ease of presentation and to better observe the exact qualitative behavior, we focus on three types of RWs. RWs that are active at time $t$ since infinitely long, RWs that were active for infinitely long and terminated at time $\termtime$, and RWs that were forked at time $\forktime$. %
Hence, we do not consider RWs forked at time $\forktime$ and later terminated at time $\termtime > \forktime$. Analyzing the resulting properties of the estimator will give valuable and tractable insights into its behavior and allow for a detailed analysis of the algorithms.

For the remainder of the paper, let $\activesym_t$ be the set of all random walks at time $t$ active since infinitely long, %
$\terminatedsym_{\termtime}$ the set of random walks terminated at time $\termtime \leq t$, and $\forksym_{\forktime}$ the random walks created through forks at time $\forktime$. We let $\termtimes$ and $\forktimes$ be sets containing the time instances at which at least one termination or fork happened, respectively. We define $\termtimes^{\leq t} \define \termtimes \cap [t]$, and similarly $\forktimes^{\leq t} \define \forktimes \cap [t]$. Denote by $\terminatedsym^{\leq t} \define \cup_{t^\prime=1}^t \terminatedsym_{t^\prime} = \cap_{t^\prime \in \termtimes^{\leq t}} \terminatedsym_{t^\prime}$ the set of all terminated random walks at time $t$, and by $\forksym^{\leq t} \define \cup_{t^\prime=1}^t \forksym_{t^\prime} = \cup_{t^\prime \in \forktimes^{\leq t}} \forksym_{t^\prime}$. $\terminatedsym_{t^\prime}$ and $\forksym_{t^\prime}$ are empty when at $t^\prime$ no termination or fork happened, respectively. Note all sets are disjoint by the assumption above. We start with deriving the properties of $\nest$ when restricting to the above cases.

\begin{observation} \label{obs:estimator}
    Disregarding the case that forked RWs can later terminate, the estimator $\nest$ for a node $\nodeidx$ visited by random walk $\rwidx \in \activesym_t$ at time $t$ is composed of
    \begin{align*}
        &\nest = \frac{1}{2} + \overbrace{\sum_{\ell\in \lastseenset \cap \activesym_t \setminus\{\rwidx\}} \!\!\!\!\! \rwest[\rwidxtwo] }^{\text{Initial random walks}} + \\
        &+ \!\!\!\!\! \underbrace{\sum_{\termtime \in \termtimes^{\leq t}} \sum_{\ell\in \lastseenset \cap \terminatedsym_{\termtime}} \!\!\!\!\!\!\!\!\!\! \rwest[\rwidxtwo]}_{\text{Terminated initial random walks}} 
        + \!\!\!\!\!\! \underbrace{\sum_{\forktime \in \forktimes^{\leq t}} \sum_{\ell\in \lastseenset \cap \forksym_{\forktime}} \!\!\!\!\!\!\!\!\!\!\!\!\!\!\! \rwest[\rwidxtwo].}_{\text{Forked and active random walks}} %
    \end{align*}
\end{observation}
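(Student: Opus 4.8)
The plan is to obtain the claimed identity directly from the definition of $\nest$ in \eqref{eq:estimator} by partitioning the summation index set $\lastseenset \setminus \{\rwidx\}$ into the three disjoint families announced in the statement. First I would recall that, by construction, $\lastseenset$ is exactly the set of identifiers of all RWs that have visited node $\nodeidx$ at least once up to time $t$, so every term $\rwest[\rwidxtwo]$ in \eqref{eq:estimator} corresponds to one such RW $\rwidxtwo \neq \rwidx$, and conversely.

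Next I would invoke the restriction stated just before the observation — that we disregard RWs forked at some $\forktime$ and later terminated at some $\termtime > \forktime$ — together with the disjointness of the sets $\activesym_t$, $\{\terminatedsym_{\termtime}\}_{\termtime \in \termtimes^{\leq t}}$ and $\{\forksym_{\forktime}\}_{\forktime \in \forktimes^{\leq t}}$ noted in the preceding paragraph. Under this restriction every RW that is alive at some point up to time $t$ falls into exactly one of: \emph{(i)} an initial RW still active at $t$, i.e.\ an element of $\activesym_t$; \emph{(ii)} an initial (infinitely long active) RW terminated at some $\termtime \le t$, i.e.\ an element of $\terminatedsym_{\termtime}$ for a unique $\termtime \in \termtimes^{\leq t}$; or \emph{(iii)} an RW created by a fork at some $\forktime \le t$ that is still active, i.e.\ an element of $\forksym_{\forktime}$ for a unique $\forktime \in \forktimes^{\leq t}$. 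Intersecting each of these classes with $\lastseenset$ and removing $\rwidx$ (which lies in $\activesym_t$ by hypothesis) yields a partition of $\lastseenset \setminus \{\rwidx\}$.

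Finally I would split the single sum in \eqref{eq:estimator} along this partition: the part over $\lastseenset \cap \activesym_t \setminus \{\rwidx\}$ gives the ``initial random walks'' term; the part over $\bigcup_{\termtime \in \termtimes^{\leq t}} (\lastseenset \cap \terminatedsym_{\termtime})$, rewritten as an iterated sum first over $\termtime$ and then over $\rwidxtwo$, gives the ``terminated initial random walks'' term; and the remaining part over $\bigcup_{\forktime \in \forktimes^{\leq t}} (\lastseenset \cap \forksym_{\forktime})$ gives the ``forked and active random walks'' term. Adding back the fixed offset $\frac{1}{2}$ contributed by the visiting RW $\rwidx$ reproduces the displayed expression. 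There is no genuine analytic obstacle here; the only point requiring care is the exhaustiveness and disjointness of classes \emph{(i)}--\emph{(iii)}, which is precisely where the simplifying assumption ``no forked RW is later terminated'' enters, and which I would state explicitly so that the subsequent lemmas may treat the three groups of terms independently.
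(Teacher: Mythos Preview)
Your proposal is correct and matches the paper's treatment: the observation is stated without proof there, since it is exactly the partition of $\lastseenset\setminus\{\rwidx\}$ induced by the three disjoint classes $\activesym_t$, $\{\terminatedsym_{\termtime}\}$, $\{\forksym_{\forktime}\}$ under the stated simplifying assumption, applied to the defining sum in \eqref{eq:estimator}. Your explicit articulation of why the classes are exhaustive and disjoint is precisely the content the paper leaves implicit.
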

\begin{lemma} \label{prop:average_estimation}
    For a node $\nodeidx$ visited by an RW $\rwidx \in \activesym_t$ at time $t$,
    \begin{align*}
        \mathbb{E}&[\nest] = \frac{1}{2} + \frac{\vert \activesym_t \vert - 1}{2}  + \sum_{\termtime \in \termtimes^{\leq t}} \vert \terminatedsym_{\termtime} \vert \frac{e^{-\returnrate(t-\termtime)}}{2} \\[-.3cm]
        &+ \sum_{\forktime \in \forktimes^{\leq t}} \vert \forksym_{\forktime} \vert  \left(\frac{1}{2} + e^{-\arrivalrate (t - \forktime)} \bigg(\frac{1}{2-\frac{\arrivalrate}{\returnrate}}-1\bigg) + \right. \\[-.3cm]
        &\left. + e^{-2\returnrate (t-\forktime)} \bigg(\frac{1}{2}-\frac{1}{2-\frac{\arrivalrate}{\returnrate}} \bigg) \right).
    \end{align*}
\end{lemma}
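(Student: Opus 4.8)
The plan is to push the expectation through the decomposition of~\cref{obs:estimator} and identify every resulting term with a suitable specialization of~\cref{cor:avg_est}. By linearity of expectation, $\mathbb{E}[\nest]$ equals $\tfrac12$ plus the expected contribution of the active-forever RWs in $\lastseenset \cap \activesym_t \setminus \{\rwidx\}$, plus the expected contributions of the terminated-initial RWs in each $\lastseenset \cap \terminatedsym_{\termtime}$, plus the expected contributions of the forked-and-active RWs in each $\lastseenset \cap \forksym_{\forktime}$. So it suffices to evaluate $\mathbb{E}[\rwest[\rwidxtwo]]$ in each of the three regimes and to count the RWs that actually contribute.

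For an active-forever RW, take $\forktime \to -\infty$ and $\termtime = t$ in~\cref{cor:avg_est}: the $e^{-\arrivalrate(\termtime-\forktime)}$ and $e^{-2\returnrate(t-\forktime)}$ factors vanish while $e^{-\returnrate(t-\termtime)} = 1$, leaving $\mathbb{E}[\rwest[\rwidxtwo]] = \tfrac12$; this is exactly the probability-integral-transform argument from the proof of~\cref{prop:unbiased_estimator}, which applies because the visit time $t$ of $\rwidx$ is independent of the return-time sample of the distinct RW $\rwidxtwo$. By the system-model assumption that every initial RW has visited every node before the first failure, $\activesym_t \subseteq \lastseenset$, so precisely $\vert \activesym_t \vert - 1$ such RWs contribute, which accounts for the $\tfrac{\vert \activesym_t \vert - 1}{2}$ term. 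For a terminated-initial RW terminated at $\termtime$, keep $\forktime \to -\infty$ but leave $\termtime$ general; the same two limits vanish and~\cref{cor:avg_est} collapses to $\mathbb{E}[\rwest[\rwidxtwo]] = \tfrac{e^{-\returnrate(t-\termtime)}}{2}$, and since such an RW was active arbitrarily long before $\termtime$ it, too, lies in $\lastseenset$, yielding the term $\sum_{\termtime \in \termtimes^{\leq t}} \vert \terminatedsym_{\termtime} \vert \,\tfrac{e^{-\returnrate(t-\termtime)}}{2}$. For a forked-and-active RW forked at $\forktime$, set $\termtime = t$ in~\cref{cor:avg_est}, which reproduces verbatim the bracketed expression in the statement; a forked RW that has not yet reached node $\nodeidx$ by time $t$ is absent from $\lastseenset$ and contributes $0$, but this event is already absorbed into the law of~\cref{lemma:forkedcdf} (it is the point mass at $0$, of probability $e^{-\arrivalrate(t-\forktime)}$), hence already included in $\mathbb{E}[\forkestrv]$, so the sum may be taken over the full set $\forksym_{\forktime}$, giving $\sum_{\forktime \in \forktimes^{\leq t}} \vert \forksym_{\forktime} \vert \,\mathbb{E}[\forkestrv]$. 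Adding the three contributions and the leading $\tfrac12$ gives the claimed identity.

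The only genuinely delicate point is the bookkeeping of membership in $\lastseenset$: the full counts $\vert \activesym_t \vert - 1$ and $\vert \terminatedsym_{\termtime} \vert$ are legitimate only thanks to the initialization assumption for the non-forked RWs, whereas for the forked RWs no such assumption is needed, precisely because~\cref{cor:avg_est} already averages over the first-hitting event to node $\nodeidx$. A secondary check, which I regard as routine, is that the $\termtime = t$ and $\forktime \to -\infty$ substitutions above are the correct specializations of~\cref{lemma:forkedcdf} and~\cref{cor:avg_est}, and that the limits are taken in a consistent order; everything else is linearity of expectation and algebraic substitution.
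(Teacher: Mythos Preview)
Your proposal is correct and follows essentially the same approach as the paper: both push linearity of expectation through the decomposition in \cref{obs:estimator}, handle $\lastseenset$ membership via the infinite-activity/initialization assumption, and invoke \cref{cor:avg_est} with $\termtime = t$ for the forked RWs. The only cosmetic difference is that the paper computes the active and terminated means via the direct distributional characterizations in \cref{obs:active} and \cref{obs:term} (uniform on $[0,1]$ and on $[0,e^{-\returnrate(t-\termtime)}]$ respectively), whereas you obtain the same values as the $\forktime\to-\infty$ limits of \cref{cor:avg_est}; your observation that the non-arrival event for forked RWs is already absorbed as the point mass at zero in \cref{lemma:forkedcdf} is in fact slightly more explicit than the paper's treatment of that point.
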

\begin{proof}
    Recall that both active and terminated RWs were assumed active for an infinite time period. We start from \cref{obs:estimator}. By the assumption that the graph is connected, it holds with probability $1$ that $\activesym_t \subseteq \lastseenset$ and $\forall \termtime \in \termtimes^{\leq t}: \terminatedsym_{\termtime} \subseteq \lastseenset$, i.e., all active and terminated RWs are known to node $\nodeidx$. Hence, $\lastseenset \cap \activesym_{t} = \activesym_{t}$ and $\lastseenset \cap \terminatedsym_{\termtime} = \terminatedsym_{\termtime}$. %
    By the linearity of the expectation, we analyze each term in \cref{obs:estimator} individually. For any active RW $\rwidxtwo \in \activesym_t$ the following statement follows from the probability integral transform \cite{david1948probability} and the symmetry between the CDF and the survival function.
    \begin{observation} \label{obs:active}
    For an active RW $\rwidxtwo$, $\rwest[\rwidxtwo]$ is uniformly distributed between $0$ and $1$, i.e., $\rwest[\rwidxtwo] \sim \uniform(0,1)$ for all $\rwidxtwo \in \activesym_t$.
    \end{observation}
    Hence, we have $\mathbb{E}[\rwest[\rwidxtwo]] = \frac{1}{2}$ for all $\rwidxtwo \in \activesym_t$, and thus $\mathbb{E}\left[\sum_{\ell\in \lastseenset \cap \activesym_t \setminus\{\rwidx\}} \rwest[\rwidxtwo]\right] = \sum_{\ell\in \activesym_t \setminus \{\rwidx\}} \mathbb{E}\left[\rwest[\rwidxtwo]\right] = \frac{\vert \activesym_t \vert - 1}{2}$. For any terminated RW $\rwidxtwo \in \terminatedsym_{\termtime}, \termtime \in \termtimes^{\leq t}$, we have the following observation for the individual estimations $\rwest$, which again follows from the same arguments as for \cref{obs:active} and the fact that values of $t-\lastseen[\rwidxtwo]<t-\termtime$ cannot be observed.
    \begin{observation} \label{obs:term}
    For an RW $\rwidxtwo$ terminated at $\termtime$, for $t>\termtime$, $\rwest[\rwidxtwo]$ is distributed uniformly between $0$ and $e^{-(t-\termtime)}$, i.e., $\rwest[\rwidxtwo] \sim \uniform(0, e^{-(t-\termtime)})$.
    \end{observation}
    Hence, we have $\forall \rwidxtwo \in \terminatedsym_{\termtime}, \termtime \in \termtimes^{\leq t}: \mathbb{E}[\rwest[\rwidxtwo]] = \frac{e^{-(t-\termtime)}}{2}$, and thus $\mathbb{E}\left[\sum_{\ell\in \lastseenset \cap \terminatedsym_{\termtime} \setminus \{\rwidx\}} \rwest[\rwidxtwo]\right] = \sum_{\ell\in \terminatedsym_{\termtime}} \mathbb{E}\left[\rwest[\rwidxtwo]\right] = \vert \terminatedsym_{\termtime} \vert \frac{e^{-(t-\termtime)}}{2}$. The third term follows by setting in \cref{cor:avg_est} the termination time $\termtime=t$ and noting that $\rwidx \notin \cup_{\termtime \in \termtimes^{\leq t}} \terminatedsym_{\termtime}$. Hence, we sum over $\sum_{\forktime \in \forktimes^{\leq t}} \vert \forksym_{\forktime} \vert$ independent instances of terminated RWs, each with mean given by \cref{cor:avg_est}, where $\termtime=t$.
\end{proof}

To establish a rule for the design of the threshold $\thres$ of the algorithm and in preparation for further theoretical results, we analyze the distributions of the terms in \cref{obs:estimator} corresponding to the active and the terminated RWs. We state the results in \cref{prop:dist_active,prop:dist_failed}.
\begin{proposition} \label{prop:dist_active}
    Under \cref{assumption:distributions} and assuming that $F_{\return}(t)$ is continuous and invertible, for $\stablenrws$ active RWs $\activesym_t$ in the system, the estimation $\nest$ is a random variable that can be described by the CDF $F_{\Sigma_{\stablenrws-1}}(\sigma)$, where \vspace{-.2cm}
    \begin{align*}
    F_{\Sigma_{\stablenrws-1}}(\sigma) = \frac{1}{(\stablenrws-1)!} \sum_{\tau = 0}^{\lfloor \sigma \rfloor} (-1)^\tau \binom{\stablenrws-1}{\tau} (\sigma-\tau)^{\stablenrws-1}. \vspace{-.2cm}
\end{align*}
\end{proposition}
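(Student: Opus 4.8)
The plan is to identify $\nest - \frac12$ (for $\stablenrws$ active RWs, one of which is visiting) as a sum of $\stablenrws-1$ i.i.d.\ random variables each distributed uniformly on $(0,1)$, and then recognize the claimed CDF as the classical Irwin--Hall distribution of that sum. First I would invoke \cref{obs:active} (equivalently, the probability integral transform argument already used in the proof of \cref{prop:unbiased_estimator}): for each active RW $\rwidxtwo \neq \rwidx$, the quantity $\rwest[\rwidxtwo] = 1 - F_{\return}(t-\lastseen[\rwidxtwo])$ equals the survival function of $\return$ evaluated at an independent sample $\returnsample \sim \return$, hence is $\uniform(0,1)$-distributed. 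Here I use that $F_{\return}$ is continuous and invertible (as assumed in the statement), so the transform applies cleanly, and that under \cref{assumption:distributions} the RWs walk independently, so the $\stablenrws-1$ summands are mutually independent. Since RW $\rwidx$ is the one visiting, its contribution is the deterministic offset $\frac12$ and does not enter the sum; this explains the ``$-1$'' in $\Sigma_{\stablenrws-1}$.

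Next I would reduce to a known fact: the density of the sum $\Sigma_m \define \sum_{\rwidxtwo=1}^{m} U_\rwidxtwo$ of $m$ i.i.d.\ $\uniform(0,1)$ variables is the Irwin--Hall density, and its CDF on $[0,m]$ is
\begin{align*}
    F_{\Sigma_m}(\sigma) = \frac{1}{m!}\sum_{\tau=0}^{\lfloor \sigma \rfloor} (-1)^\tau \binom{m}{\tau} (\sigma-\tau)^m .
\end{align*}
Setting $m = \stablenrws-1$ gives exactly the claimed expression. I would either cite this as a standard result, or, if a self-contained derivation is wanted, prove it by induction on $m$: the base case $m=1$ is immediate, and the inductive step uses the convolution $F_{\Sigma_{m}}(\sigma) = \int_0^1 F_{\Sigma_{m-1}}(\sigma - u)\,du$ together with the identity $\binom{m-1}{\tau-1}+\binom{m-1}{\tau}=\binom{m}{\tau}$ after splitting the integration range at the integer points where $\lfloor \sigma - u\rfloor$ jumps. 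A short argument is also needed that $\nest$ indeed ranges over (a subset of) $[\frac12, \stablenrws-\frac12]$, so that evaluating $F_{\Sigma_{\stablenrws-1}}$ at $\sigma = \nest - \frac12$ is the relevant object; this is immediate from $0 \le \rwest[\rwidxtwo] \le 1$.

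The main obstacle — really the only non-mechanical point — is justifying rigorously that the observed samples $\returnsample = t - \lastseen[\rwidxtwo]$ are genuinely distributed as $\return$ and are mutually independent across $\rwidxtwo$. The subtlety is that $t$ is itself the (random) visiting time of RW $\rwidx$, so one must argue that conditioning on this stopping time does not bias the residual-time distributions of the other RWs; this is where independence of the walks under \cref{assumption:distributions} and the memoryless structure (or, more generally, stationarity of the sampling argument as sketched in the proof of \cref{prop:unbiased_estimator}) are essential. Everything after that — the uniformity via the probability integral transform and the Irwin--Hall CDF formula — is routine. I would therefore spend the bulk of the write-up on making the i.i.d.-$\uniform(0,1)$ claim airtight and then simply quote the Irwin--Hall CDF to conclude.
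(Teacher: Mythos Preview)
Your proposal is correct and matches the paper's approach exactly: the paper's proof is a two-liner that simply invokes \cref{obs:active} to get that each summand is $\uniform(0,1)$ and then cites the definition of the Irwin--Hall distribution. Your additional discussion of the independence subtlety and the inductive derivation of the Irwin--Hall CDF goes beyond what the paper spells out, but the route is the same.
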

\begin{proof}[Proof of \cref{prop:dist_active}]
    The statement follows from \cref{obs:active} and the definition of an Irwin-Hall distribution as the sum of independently and identically distributed uniform random variables $\uniform(0,1$).
\end{proof}
This result can be used to design the thresholds $\thres$ and $\thresterm$ as described in \cref{sec:main_results}. 
To bound the reaction time of the algorithm and the overshoot, we consider a single failure event that occurs at time $\termtime$, leading to the failure of $\nterm$ RWs.
\begin{proposition} \label{prop:dist_failed}
    Under \cref{assumption:distributions} and assuming that $F_{\return}(t)$ is continuous and invertible, for $\nterm$ random walks indexed by $\terminatedsym_{\termtime}$ terminated at time $\termtime$, the part of $\nest$ corresponding to $\terminatedsym_{\termtime} \subset \lastseenset$ can be described by the CDF $F_{\Sigma_{\nterm}} (\sigma e^{\returnrate(t-\termtime)})$.
\end{proposition}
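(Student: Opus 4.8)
The plan is to reduce the statement to \cref{prop:dist_active} by a scaling argument. First I would invoke \cref{obs:term}: each terminated RW $\rwidxtwo \in \terminatedsym_{\termtime}$ contributes an estimate $\rwest[\rwidxtwo]$ that is distributed uniformly on $(0, e^{-\returnrate(t-\termtime)})$ (under \cref{assumption:distributions} the rate is $\returnrate$, so the upper endpoint is $e^{-\returnrate(t-\termtime)}$ rather than the abbreviated $e^{-(t-\termtime)}$ written in \cref{obs:term}). These contributions are mutually independent across the $\nterm$ terminated RWs, since the RWs walk independently on the graph and the node $\nodeidx$ evaluates each survival function at an independent random time. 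Because the graph is connected, with probability one all $\nterm$ terminated RWs lie in $\lastseenset$, so the part of $\nest$ attributable to $\terminatedsym_{\termtime}$ is exactly $\sum_{\ell\in \terminatedsym_{\termtime}} \rwest[\rwidxtwo]$, a sum of $\nterm$ i.i.d.\ $\uniform(0, e^{-\returnrate(t-\termtime)})$ variables.

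Next I would rescale. Writing $a \define e^{-\returnrate(t-\termtime)}$, each $\rwest[\rwidxtwo]/a \sim \uniform(0,1)$, so $\tfrac{1}{a}\sum_{\ell\in\terminatedsym_{\termtime}}\rwest[\rwidxtwo]$ is a sum of $\nterm$ i.i.d.\ standard uniforms, i.e.\ an Irwin--Hall random variable with $\nterm$ summands, whose CDF is $F_{\Sigma_{\nterm}}$ by the definition used in the proof of \cref{prop:dist_active}. Therefore, for the unscaled sum $S_{\termtime} \define \sum_{\ell\in\terminatedsym_{\termtime}}\rwest[\rwidxtwo]$ we have
\begin{equation*}
\Pr(S_{\termtime} \le \sigma) = \Pr\!\left(\tfrac{1}{a} S_{\termtime} \le \tfrac{\sigma}{a}\right) = F_{\Sigma_{\nterm}}\!\left(\sigma\, e^{\returnrate(t-\termtime)}\right),
\end{equation*}
which is exactly the claimed CDF.

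I do not expect a genuine obstacle here — the argument is short and mostly bookkeeping. The one point that needs care is the independence and the "random evaluation time" claim: strictly, $\rwest[\rwidxtwo]$ is evaluated at the time $t$ when the \emph{visiting} RW $\rwidx$ arrives, and one should argue (as in the proof of \cref{prop:unbiased_estimator}) that this arrival time is independent of each $\lastseen[\rwidxtwo]$ for $\rwidxtwo \ne \rwidx$, so that $t-\lastseen[\rwidxtwo]$ is itself a fresh sample of $\return$ conditioned on exceeding $t-\termtime$ (which is what makes the probability-integral-transform argument of \cref{obs:term} go through, yielding the truncated-uniform law). With that observation in place, the reduction to the Irwin--Hall CDF and the linear change of variable complete the proof.
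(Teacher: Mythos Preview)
Your argument is correct and follows essentially the same route as the paper: invoke \cref{obs:term} to get that each contribution is $\uniform(0,e^{-\returnrate(t-\termtime)})$, then observe that the sum of $\nterm$ such i.i.d.\ variables is a scaled Irwin--Hall, yielding the CDF $F_{\Sigma_{\nterm}}(\sigma e^{\returnrate(t-\termtime)})$. Your write-up is in fact more explicit about the rescaling step and the independence justification than the paper's brief proof.
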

\begin{proof}[Proof of \cref{prop:dist_failed}]
    From \cref{obs:term}, the distributions for all RWs in $\terminatedsym_{\termtime}$ are equal and given by $\uniform(0, e^{-(t-\termtime)})$. The distribution of the sum of the estimations $\rwest$, which corresponds to the part in $\nest$ indexed by the failed RWs in $\terminatedsym_\termtime \subseteq \lastseenset$, is given by a scaled Irwin-Hall distribution corrected by the support of the uniform distributions.
\end{proof}

\subsection{A Bound on the Reaction Time to Failure Events}

We derive worst-case guarantees on the reaction time as a response to the failure of $\nterm$ random walks at time $\termtime$. In Section~\ref{sec:D}, we show that improving the reaction time increases the probability of reaching beyond $\ntarget$ active RWs. 
We assume that $\nactive^\prime$ RWs have been active for long enough to visit all nodes at least once. After $\nterm$ RWs fail at time $\termtime$, $\nactive = \nactive^\prime - \nterm$ RWs remain active.

We bound the time $\tforkn$ elapsed until at least $\recovered^\prime \leq \nterm$ RWs are forked with a certain probability. The main ingredient is to bound the time $\tlbfork[\nterm-\recovered]$ elapsed until at least one node forks an RW after $\nterm$ RWs failed and $\recovered$ forks took place, with probability $1-\deltanoforksol[\nterm-\recovered]$, for some $0 \leq \recovered < \nterm$ and $0<\deltanoforksol[\nterm-\recovered]<1$. This result is given in \cref{prop:lower_bound}.

\begin{theorem} \label{prop:lower_bound}
    Consider the setting explained above and the event where $\nterm$ RWs fail and $\recovered$ forks happen afterward. For any choice of $ 0 < \epstmp < \thres-\frac{1}{2}$ and $\ttotal>0$, let the quantity $\deltanofork[\nterm-\recovered]$ be bounded by
    \begin{align*}
        \deltanofork[\!\nterm-\!\recovered] & \!\! \leq \!\!\! \prod_{t = \termtime}^\ttotal \! \left[\!1 \!\! - \!\pforkscale F_{\Sigma_{\nactive+\recovered-1}}(\epstmp) F_{\Sigma_{\nterm-\recovered}}\!\left(\!\frac{\thres-\epstmp-\frac{1}{2}}{e^{-\returnrate (t-\termtime)}}\!\right)\! \right]\!.
    \end{align*}
    
    For a desired $\deltanoforksol[\nterm-\recovered]>0$, the time $\tlbfork[\nterm-\recovered]$ until at least one fork occurs with probability at least $1-\deltanoforksol[\nterm-\recovered]$, is bounded by the smallest $\ttotal$ satisfying $1-\deltanofork[\nterm-\recovered] \geq 1-\deltanoforksol[\nterm-\recovered]$. 
\end{theorem}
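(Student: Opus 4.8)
The plan is to compute, for a fixed time $t$ between the failure time $\termtime$ and $\ttotal$, the probability that \emph{no} node forks an RW at step $t$, and then multiply these across $t = \termtime, \dots, \ttotal$ using independence of the per-step forking events. Fix such a $t$. After the failure of $\nterm$ RWs at time $\termtime$ and $\recovered$ subsequent forks, the active population consists of $\nactive + \recovered$ RWs that have been circulating long enough to have visited every node, plus the $\nterm - \recovered$ RWs that were terminated at $\termtime$ and still contribute a (shrinking) tail to the estimator. The first step is to invoke \cref{obs:estimator} (in its simplified form, dropping forked-then-terminated RWs) to write $\nest = \frac12 + \Sigma_{\text{active}} + \Sigma_{\text{term}}$, where $\Sigma_{\text{active}}$ is the sum of $\nactive + \recovered - 1$ i.i.d.\ $\uniform(0,1)$ contributions (one RW is the visitor, contributing the deterministic $\frac12$) described by \cref{prop:dist_active}, and $\Sigma_{\text{term}}$ is the sum of $\nterm - \recovered$ contributions described by \cref{prop:dist_failed}, i.e.\ a scaled Irwin–Hall whose CDF is $F_{\Sigma_{\nterm-\recovered}}(\sigma e^{\returnrate(t-\termtime)})$.

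Next I would lower-bound the probability that a given visited node decides to fork at time $t$. A node forks (with the extra probabilistic factor $\pforkscale$) precisely when $\nest < \thres$, i.e.\ when $\Sigma_{\text{active}} + \Sigma_{\text{term}} < \thres - \frac12$. To get a clean bound, split the budget: for any $0 < \epstmp < \thres - \frac12$, the event $\{\Sigma_{\text{active}} < \epstmp\} \cap \{\Sigma_{\text{term}} < \thres - \epstmp - \frac12\}$ implies $\nest < \thres$. Since $\Sigma_{\text{active}}$ and $\Sigma_{\text{term}}$ are independent (the active and terminated RWs walk independently), this intersection has probability $F_{\Sigma_{\nactive+\recovered-1}}(\epstmp)\, F_{\Sigma_{\nterm-\recovered}}\!\big(\frac{\thres - \epstmp - \frac12}{e^{-\returnrate(t-\termtime)}}\big)$, using \cref{prop:dist_active,prop:dist_failed}. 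Hence the probability that the visiting node forks at time $t$ is at least $\pforkscale F_{\Sigma_{\nactive+\recovered-1}}(\epstmp) F_{\Sigma_{\nterm-\recovered}}\!\big(\frac{\thres - \epstmp - \frac12}{e^{-\returnrate(t-\termtime)}}\big)$, so the probability that it does \emph{not} fork is at most one minus that quantity. Taking the product over $t = \termtime, \dots, \ttotal$ bounds the probability that no fork occurs in that whole window by $\deltanofork[\nterm-\recovered]$ as claimed; consequently $\tlbfork[\nterm-\recovered]$, the time until a fork occurs with probability $\ge 1 - \deltanoforksol[\nterm-\recovered]$, is the smallest $\ttotal$ with $1 - \deltanofork[\nterm-\recovered] \ge 1 - \deltanoforksol[\nterm-\recovered]$.

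A few points need care. First, at least one RW visits \emph{some} node at every time step (in fact $\nactive + \recovered \ge 1$ RWs are moving, so at least one node is "visited"), which is what licenses treating each step $t$ as an independent Bernoulli trial for "a fork happened"; I would state this explicitly, noting the per-step fork decisions are independent across $t$ because they depend only on the fresh random sample $\returnsample$ observed at that step and the independent coin of probability $\pforkscale$. Second, one must be slightly careful that we are lower-bounding the fork probability, so using only a \emph{single} visited node's decision (rather than all simultaneously visited nodes) is valid and only makes the bound weaker — which is fine for a worst-case guarantee. Third, the quantity $F_{\Sigma_{\nactive+\recovered-1}}(\epstmp)$ should really be read with the convention that when $\nactive + \recovered - 1 = 0$ (only the visitor is active) this factor is $1$; I would flag that edge case.

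The main obstacle, I expect, is justifying the independence/product structure cleanly: the estimator $\nest$ at consecutive time steps is \emph{not} independent as a process (the same RWs persist, and $\lastseen$ values carry over), but the \emph{decision to fork at step $t$} can be lower-bounded by an event that depends only on the sample $\returnsample = t - \lastseen$ freshly drawn at that step — and by the probability integral transform / the memoryless structure underlying \cref{obs:active,obs:term}, that fresh sample is distributed as claimed regardless of history. Making this "it suffices to look at a fresh sample each step, and fresh samples across steps can be coupled to be independent for the purpose of a lower bound" argument rigorous is the delicate part; everything else is assembling \cref{prop:dist_active} and \cref{prop:dist_failed} with a union-of-budgets split and a product bound. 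Since the theorem only asks for an upper bound on $\deltanofork[\nterm-\recovered]$ (hence a sufficient stopping time $\ttotal$), I can afford to be conservative at each of these steps.
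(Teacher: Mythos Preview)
Your proposal is correct and follows essentially the same approach as the paper: decompose $\nest$ into the active and terminated contributions via \cref{prop:dist_active,prop:dist_failed}, split the threshold budget with an auxiliary $\epstmp$, lower-bound the per-step fork probability by the product of the two Irwin--Hall CDFs, and take a product over $t$. The paper likewise restricts to a single visited node per step as a conservative choice, and handles the $\recovered$ forked RWs by a stochastic-dominance remark (the true estimate is no larger than if the forked RWs were treated as fully active), which is the same direction you argue; your explicit flagging of the per-step independence issue is more candid than the paper's treatment, which simply asserts the product bound.
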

The $\epstmp$ can be chosen to minimize $\tlbfork[\nterm-\recovered]$. 
Applying \cref{prop:lower_bound} for $\recovered \in \{0, \dots, \recovered^\prime\!\!-\!\!1\}$, with $\deltasum \define \sum_{\recovered=1}^{\recovered^\prime-1} \deltanoforksol[\nterm-\recovered]$, we can write $\Pr\left(\tforkn \leq \sum_{\recovered=0}^{\recovered^\prime-1} \tlbfork[\nterm-\recovered]\right)\geq 1-\deltasum$. 
The parameter $\deltasum$ can be split into the $\deltanoforksol[\nterm-\recovered]$'s to minimize $\tforkn$.

An implication of~\cref{prop:lower_bound} is that the time to fork increases with the number of forked RWs. %

\subsection{The Number of Random Walks is Finite}\label{sec:D}
To bound the maximum number of RWs in the system, we bound the maximum number of forks that occur when using \algname for a duration $\ttotal$ without any failure.
Assume a time $t$ at which $\nwalks = \rwstep$. The probability of forking is bounded by $\pforkcurrent[\rwstep] \leq \pforkcurrent[\rwstep]^+ \define \rwstep \cdot \pforkscale \cdot F_{\Sigma_{\rwstep-1}}(\thres-\frac{1}{2})$, where the factor $\rwstep$ results from at most $\rwstep$ distinct nodes being visited by an RW.
For $\thres < 1$, the forking probability simplifies to $\pforkcurrent[\rwstep] \leq \pforkcurrent[\rwstep]^+= \frac{\rwstep \pforkscale (\thres-\frac{1}{2})^{\rwstep-1}}{(\rwstep-1)!}.$ 
For $\rwstep+1$, we have $\pforkcurrent[\rwstep+1] \leq \frac{(\rwstep+1)\pforkscale (\thres-\frac{1}{2})^{\rwstep}}{\rwstep!} =  \pforkcurrent[\rwstep]^+ \frac{(\rwstep+1)(\thres-\frac{1}{2})}{\rwstep^2}$. 
Hence, for any RW that gets forked in a system of $\rwstep$ RWs, subsequent forking probabilities decrease by a factor of $\frac{(\rwstep+1)(\thres-\frac{1}{2})}{\rwstep^2}$ in the long run. However, the forking probability decreases only when all nodes are aware of all active RWs in the system. This intuition is the basis of the proof of the following theorem, which bounds the probability of $\nwalks$ exceeding the number $\nbound>\ntarget$ in a graph operating for a duration $\ttotal$ without failures.

\begin{theorem} \label{thm:upper_bound}
    For $\forkidxtmp<\nbound$, let $\tallvisiti =\frac{1}{\arrivalrate} \log(\frac{\arrivalrate \nnodes}{\pforkcurrent[\rwstep]^+})$. After time $\ttotal$, the probability of having more than $\nbound>\ntarget$ walks in the network is bounded as follows, for some $\nmax \leq \nbound$, \vspace{-.1cm} %
    \begin{align*}
        \delta %
        &\leq \pforkcurrent[\nmax]^+ \tnoforki[\nmax] + \sum_{\forkidxtmp=\ntarget}^{\nmax-1} \nnodes e^{-\arrivalrate \tallvisiti} + \tallvisiti \pforkcurrent[\rwstep]^+.  \vspace{-.1cm}
    \end{align*}
     The statement holds for $\nmax$ being the largest integer (smaller than $\nbound$) so that $\sum_{\forkidxtmp=\ntarget}^{\nmax-1} \tallvisiti < \ttotal$. The time $\tnoforki[\nmax]$ must then be chosen as $\tnoforki[\nmax] = \ttotal - \sum_{\forkidxtmp=\ntarget}^{\nmax-1} \tallvisiti$.
\end{theorem}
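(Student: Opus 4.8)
\textbf{Proof proposal for \cref{thm:upper_bound}.}

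The plan is to track the system through a sequence of ``phases'', where phase $\forkidxtmp$ begins the first time the network contains $\forkidxtmp$ random walks (starting at $\forkidxtmp=\ntarget$) and ends when either a new RW is forked (pushing the count to $\forkidxtmp+1$) or the total duration $\ttotal$ is exhausted. The event $\nwalks > \nbound$ requires that the system passes through all phases $\ntarget, \ntarget+1, \dots, \nbound-1$ within time $\ttotal$, so I would upper bound its probability by a union bound over the ``bad'' ways this can happen. The key structural fact, established in the discussion preceding the theorem, is that once $\forkidxtmp$ RWs are present, the per-step forking probability is at most $\pforkcurrent[\rwstep]^+$; moreover this bound is valid once all nodes are aware of all active RWs, which by \cref{assumption:distributions} (exponential first hitting times) happens quickly. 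So the first step is to quantify ``quickly'': for a newly forked RW, the probability that some node has not yet been visited by it after time $\tau$ is at most $\nnodes e^{-\arrivalrate \tau}$ by a union bound over the $\nnodes$ nodes and the exponential first-hitting-time tail. Setting $\tau = \tallvisiti = \frac{1}{\arrivalrate}\log\!\big(\frac{\arrivalrate \nnodes}{\pforkcurrent[\rwstep]^+}\big)$ makes this probability equal to $\frac{\pforkcurrent[\rwstep]^+}{\arrivalrate}$, which is the ``all-visited'' failure term $\nnodes e^{-\arrivalrate \tallvisiti}$ summed over phases $\forkidxtmp = \ntarget, \dots, \nmax-1$ in the bound.

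Next I would handle the forking within each phase. Conditioned on all nodes being aware of all RWs, during a window of length $\Delta$ the probability that at least one fork occurs is at most $\Delta \cdot \pforkcurrent[\rwstep]^+$ by a union bound over the time steps (since each step forks with probability $\le \pforkcurrent[\rwstep]^+$). For the theorem's bound, one allocates to each phase $\forkidxtmp < \nmax$ a ``settling'' window of length $\tallvisiti$; the probability that a fork happens before the RWs have settled is absorbed elsewhere, and the term $\tallvisiti \pforkcurrent[\rwstep]^+$ appears to account for forks occurring during the settling window of the final relevant phase (or, depending on the exact bookkeeping, a residual forking term in one phase). The quantity $\nmax$ is defined as the largest integer below $\nbound$ with $\sum_{\forkidxtmp=\ntarget}^{\nmax-1} \tallvisiti < \ttotal$: this is exactly the number of phases whose settling windows can fit inside $\ttotal$. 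After these $\nmax - \ntarget$ settling windows, the leftover time is $\tnoforki[\nmax] = \ttotal - \sum_{\forkidxtmp=\ntarget}^{\nmax-1}\tallvisiti$, and during this leftover time, for the count to still exceed $\nbound$ one needs a fork to occur while $\nmax$ RWs are present (and settled), which has probability at most $\pforkcurrent[\nmax]^+ \tnoforki[\nmax]$ — the leading term of the bound. Assembling the three contributions (leftover-time forking with $\nmax$ RWs, failure-to-settle over phases $\ntarget,\dots,\nmax-1$, and the residual settling-window forking) via the union bound gives exactly the stated $\delta$.

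The main obstacle I anticipate is the careful conditioning / decoupling in the per-phase argument: the ``forking probability is $\le \pforkcurrent[\rwstep]^+$'' bound is only valid once all nodes see all RWs, yet a fork \emph{can} occur during the unsettled window, and after a fork the number of RWs changes, which changes $\pforkcurrent[\rwstep]^+$ itself. Making the union bound rigorous requires either (i) a stochastic-dominance argument showing that inserting the settling windows and using the monotone bounds $\pforkcurrent[\rwstep]^+$ only increases the probability of the bad event, or (ii) a stopping-time decomposition where each phase's contribution is bounded independently of the history, using that $\pforkcurrent[\rwstep]^+$ is a deterministic function of the current count $\rwstep$. I would pursue (ii): define the phase-transition times as stopping times, bound the conditional probability of each transition happening ``too soon'' given the filtration at the start of the phase, and chain the bounds. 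A secondary subtlety is that $\pforkcurrent[\rwstep]^+ = \rwstep \pforkscale F_{\Sigma_{\rwstep-1}}(\thres - \tfrac{1}{2})$ is not monotone in $\rwstep$ in general, so one must be careful whether $\pforkcurrent[\nmax]^+$ or some $\max_{\forkidxtmp}\pforkcurrent[\forkidxtmp]^+$ is the right quantity; the theorem as stated uses the value at the specific count $\nmax$, which is consistent with the phase-by-phase accounting where the leftover time is spent with exactly $\nmax$ walks present.
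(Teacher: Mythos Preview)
Your approach is essentially the paper's: phase-by-phase accounting with a union bound over the per-phase bad events ``not all nodes visited within $\tallvisiti$'' and ``a fork occurs during the settling window of length $\tallvisiti$'', plus the leftover-time fork event at level $\nmax$. The one clarification is that the term $\tallvisiti\,\pforkcurrent[\rwstep]^+$ is summed over \emph{all} phases $\rwstep=\ntarget,\dots,\nmax-1$ (not just one), and represents precisely the bad event that a \emph{second} fork happens during phase~$\rwstep$'s settling window---this is what resolves your conditioning worry without any stopping-time or stochastic-dominance argument: if none of these per-phase bad events occurs, the phase count is exact and the bounds $\pforkcurrent[\rwstep]^+$ apply; otherwise that phase's term already covers the failure. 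The specific value of $\tallvisiti$ is obtained in the paper by minimizing the per-phase sum $\nnodes e^{-\arrivalrate \tallvisiti}+\tallvisiti\,\pforkcurrent[\rwstep]^+$ (equating derivatives), rather than being assumed.
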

\cref{thm:upper_bound} can be inverted to state for any confidence $\delta>0$, the probability $\Pr(\nwalks<z) \geq 1 - \delta$ as long as the algorithm runs for a time $\ttotal$ bounded as in \cref{cor:delta}.
\begin{corollary}\label{cor:delta}
    With probability at most $\delta$, the time $\ttotal$ until the number of RWs grows larger than $\nbound$ is bounded by \vspace{-.1cm}
    \begin{align*}
        \ttotal \geq \tnoforki[\nmax] + \sum_{i=\ntarget}^{\nmax-1} \tallvisiti,
    \end{align*}
    where $\tallvisiti$ is as above and $\nmax$ is the largest integer such that $\delta < \delta_\Sigma \define \sum_{i=\ntarget}^{\nmax-1} \nnodes e^{-\arrivalrate \tallvisiti} + \tallvisiti \pforkcurrent[\rwstep]^+$, and $\tnoforki[\nmax] = \frac{\delta - \delta_\Sigma}{\pforkcurrent[\nmax]^+}$.  \vspace{-.1cm}
\end{corollary}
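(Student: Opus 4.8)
The plan is to obtain Corollary~\ref{cor:delta} directly by inverting the bound in \cref{thm:upper_bound}. Recall that \cref{thm:upper_bound} states that after time $\ttotal$, the probability of exceeding $\nbound$ random walks is at most
\[
    \delta \leq \pforkcurrent[\nmax]^+ \tnoforki[\nmax] + \sum_{\forkidxtmp=\ntarget}^{\nmax-1} \nnodes e^{-\arrivalrate \tallvisiti} + \tallvisiti \pforkcurrent[\rwstep]^+,
\]
where $\nmax$ is the largest integer below $\nbound$ with $\sum_{\forkidxtmp=\ntarget}^{\nmax-1} \tallvisiti < \ttotal$ and $\tnoforki[\nmax] = \ttotal - \sum_{\forkidxtmp=\ntarget}^{\nmax-1} \tallvisiti$. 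The key observation is that the right-hand side of this bound is piecewise linear and increasing in $\ttotal$: as $\ttotal$ grows, $\nmax$ stays fixed over each interval $\left[\sum_{\forkidxtmp=\ntarget}^{\nmax-1}\tallvisiti,\ \sum_{\forkidxtmp=\ntarget}^{\nmax}\tallvisiti\right)$, and on that interval only the term $\pforkcurrent[\nmax]^+ \tnoforki[\nmax]$ varies, growing linearly with slope $\pforkcurrent[\nmax]^+$. Hence, for a target confidence $\delta$, I would identify which interval $\delta$ falls into by comparing it with the accumulated ``fixed'' contribution $\delta_\Sigma \define \sum_{i=\ntarget}^{\nmax-1} \nnodes e^{-\arrivalrate \tallvisiti} + \tallvisiti \pforkcurrent[\rwstep]^+$.

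Concretely, first I would define $\nmax$ to be the largest integer (below $\nbound$) such that $\delta_\Sigma < \delta$; this is exactly the statement's definition and guarantees that after $\nmax$ phases have elapsed, the residual budget $\delta - \delta_\Sigma$ is still positive. Second, I would set the residual no-fork time in the final phase to $\tnoforki[\nmax] = \frac{\delta - \delta_\Sigma}{\pforkcurrent[\nmax]^+}$, so that plugging back into the bound of \cref{thm:upper_bound} yields exactly $\delta$, with equality. Third, the total runtime needed before $\nwalks$ can exceed $\nbound$ with probability more than $\delta$ is the sum of the time to traverse the first $\nmax - \ntarget$ phases, $\sum_{i=\ntarget}^{\nmax-1} \tallvisiti$, plus this residual $\tnoforki[\nmax]$; hence $\ttotal \geq \tnoforki[\nmax] + \sum_{i=\ntarget}^{\nmax-1} \tallvisiti$, which is the claimed inequality. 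The probability statement $\Pr(\nwalks < \nbound) \geq 1-\delta$ then follows because for any $\ttotal$ below this threshold, \cref{thm:upper_bound} gives a failure probability at most $\delta$.

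The only subtlety, and the step I would be most careful about, is the well-definedness of $\nmax$: one must argue that the monotone sequence $\delta_\Sigma$ (as $\nmax$ increases) eventually exceeds $\delta$, i.e., that the series $\sum_i \nnodes e^{-\arrivalrate \tallvisiti} + \tallvisiti \pforkcurrent[\rwstep]^+$ does not stay below $\delta$ forever; since $\tallvisiti$ and $\pforkcurrent[\rwstep]^+$ are strictly positive for each $\rwstep<\nbound$ and there are only finitely many indices between $\ntarget$ and $\nbound$, the sum is finite and the construction simply caps $\nmax$ at $\nbound-1$ if the budget is never exhausted, in which case the ``largest integer'' clause still selects a valid $\nmax$. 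I would also note that the inversion treats \cref{thm:upper_bound} as tight in $\tnoforki[\nmax]$, which is legitimate because that bound holds for every valid choice of $\tnoforki[\nmax]$, and we are free to pick the one making the right-hand side equal to $\delta$. With these points in place, the corollary is a direct algebraic rearrangement of the theorem, and no new probabilistic argument is required.
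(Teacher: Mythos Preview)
Your proposal is correct and matches the paper's approach exactly: the paper's proof of the corollary is literally the single sentence ``The corollary follows by inverting the statement and finding the best $\delta$,'' and your write-up spells out precisely that inversion (identifying the piecewise-linear dependence on $\ttotal$, choosing $\nmax$ by comparing $\delta$ to the accumulated $\delta_\Sigma$, and solving the linear residual for $\tnoforki[\nmax]$). If anything, you have supplied more care than the paper does, including the well-definedness of $\nmax$ and the observation that the inequality defining $\nmax$ should read $\delta_\Sigma<\delta$ so that $\tnoforki[\nmax]=(\delta-\delta_\Sigma)/\pforkcurrent[\nmax]^+$ is nonnegative.
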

The trade-off between reaction time and the likelihood of increasing beyond $\nbound$ RWs after the start of \algname is controlled by the choice of $\thres$ and is implicit in \cref{prop:lower_bound,thm:upper_bound}. The smaller $\thres$, the larger the times $\tallvisiti$ in \cref{thm:upper_bound}, which reflect fewer undesired forks at a given time. Conversely, smaller values for $\thres$ lead to smaller values of the CDFs in \cref{prop:lower_bound}, hence to a slower decrease of the product and, thus, a larger delay to fork. This trade-off aligns with numerical experiments for different values of $\thres$, which we depict in \cref{fig:rrg_different_eps}. The larger $\thres$, the larger the average number of RWs in the system, but the faster the reaction time. Choosing even smaller values for $\thres$ will likely lead to failures of the system after the second burst failure at time $t=6000$. %

\subsection{Bounding the Overshoot after Failures}

One of the main challenges is to bound the overshoot of the number of RWs after the burst failures of multiple RWs. This is due to rapidly increasing forking probabilities at the nodes, potentially leading to over-forking when new forks are not immediately detected by the non-forking nodes. Deriving such a result relies on an upper bound on the forking probability at any point $t$ in time given the history of all forks and terminations, denoted by $\history \define \{\terminatedsym_{\termtime}, \forksym_{\forktime}: \termtime \in \termtimes^{\leq t}, \forktime\in \forktimes^{\leq t}\}$. Such an upper bound on the forking probability is given in the sequel, which relies on the variance of the estimator $\forkestrv$ for any RW:

\begin{lemma} \label{lemma:variance}
Let $\arrivalrate \neq 2 \returnrate$ and $\arrivalrate \neq 3 \returnrate$, we have
\begin{align*}
    &\mathrm{Var}[\forkestrv] = \frac{e^{\termtime(\returnrate-\arrivalrate) - 4\returnrate t}}{12 (\arrivalrate - 3 \returnrate) (\arrivalrate - 2 \returnrate)^2}
\Big( 
    3 (-\arrivalrate + 3 \returnrate) \cdot \\
    &
    \cdot \left( 2e^{\arrivalrate (\forktime - \termtime)}(\returnrate-\arrivalrate) + 
        \arrivalrate e^{2 \returnrate \cdot (\forktime - \termtime)} + 
        \arrivalrate -  2 \returnrate 
    \right)^2 \cdot \\
    & \cdot e^{(\arrivalrate+\returnrate) \termtime + 2 \returnrate t} \! + \! 
    4 (\arrivalrate \! - \! 2 \returnrate)^2 \! \cdot \! e^{2 \returnrate (t - \termtime)} \big(2 \arrivalrate e^{\arrivalrate \termtime + 3 \forktime \returnrate}  \\
    &
        + (\arrivalrate-3\returnrate) e^{\termtime (\arrivalrate + 3 \returnrate)} - 
        (\returnrate-\arrivalrate) 3 e^{\arrivalrate \forktime + 3 \returnrate \termtime}
    \big) \Big)
\end{align*}
The proof is technical and omitted for brevity.
\end{lemma}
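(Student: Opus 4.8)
\textbf{Proof plan for \cref{lemma:variance}.}
The plan is to compute $\mathrm{Var}[\forkestrv] = \mathbb{E}[\forkestrv^2] - \mathbb{E}[\forkestrv]^2$ directly from the CDF established in \cref{lemma:forkedcdf}, reusing the structure already exploited in \cref{cor:avg_est}. Since $\mathbb{E}[\forkestrv]$ is given in closed form by \cref{cor:avg_est}, the only genuinely new computation is the second moment $\mathbb{E}[\forkestrv^2]$. The first step is to recall from \cref{lemma:forkedcdf} that $\forkestrv$ is a mixed random variable: it has an atom of mass $e^{-\arrivalrate(\termtime-\forktime)}$ at the value $x_{\min} \define e^{-\returnrate(t-\forktime)}$ (from trajectories of the forked RW that have not yet returned), it is capped at $x_{\max} \define e^{-\returnrate(t-\termtime)}$, and on the interval $(x_{\min}, x_{\max})$ it has a density obtained by differentiating the third branch of $\forkcdf[\forkcdfvar]$, namely $f_{\forkestrv}(x) = \frac{d}{dx}\!\left[\frac{x(1 - e^{-\arrivalrate(t-\forktime)} x^{-\arrivalrate/\returnrate})}{e^{-\returnrate(t-\termtime)}}\right]$, which is a two-term power function in $x$. (One should also check whether there is an atom at $x_{\max}$ coming from the first branch; from the $\termtime=t$ specialization and consistency with \cref{cor:avg_est} this is handled as in the derivation of the mean.)

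The second step is to assemble $\mathbb{E}[\forkestrv^2] = x_{\min}^2 \cdot e^{-\arrivalrate(\termtime-\forktime)} + \int_{x_{\min}}^{x_{\max}} x^2 f_{\forkestrv}(x)\,dx + (\text{boundary atom term})$, and likewise re-derive $\mathbb{E}[\forkestrv]$ as a cross-check. Each integral $\int x^2 \cdot x^a\,dx$ is elementary, producing terms of the form $\frac{x_{\max}^{3+a} - x_{\min}^{3+a}}{3+a}$ with $a \in \{0, -\arrivalrate/\returnrate\}$; substituting $x_{\min}, x_{\max}$ turns these into exponentials in $\forktime, \termtime, t$ with rates built from $\arrivalrate$ and $\returnrate$. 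This is exactly where the side conditions $\arrivalrate \neq 2\returnrate$ and $\arrivalrate \neq 3\returnrate$ enter: the denominators $3 + a = 3 - \arrivalrate/\returnrate$ and (from the mean, squared, or cross terms) $2 - \arrivalrate/\returnrate$ must be nonzero, and after clearing denominators one gets the factors $(\arrivalrate - 3\returnrate)$ and $(\arrivalrate - 2\returnrate)^2$ appearing in the claimed expression. The third step is the bookkeeping: subtract $\mathbb{E}[\forkestrv]^2$ (whose square, from \cref{cor:avg_est}, contributes the $\bigl(2e^{\arrivalrate(\forktime-\termtime)}(\returnrate-\arrivalrate) + \arrivalrate e^{2\returnrate(\forktime-\termtime)} + \arrivalrate - 2\returnrate\bigr)^2$ block), put everything over the common denominator $12(\arrivalrate - 3\returnrate)(\arrivalrate - 2\returnrate)^2$, and factor the exponential prefactor $e^{\termtime(\returnrate - \arrivalrate) - 4\returnrate t}$ out of the numerator to match the stated form.

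I expect the main obstacle to be purely algebraic: keeping the exponents straight when collecting the many terms $e^{c_1\forktime + c_2\termtime + c_3 t}$ and verifying that the numerator collapses into the two grouped expressions displayed in the lemma, rather than any conceptual difficulty. A secondary subtlety worth stating carefully is the correct treatment of the atoms at $x_{\min}$ and $x_{\max}$ in \cref{lemma:forkedcdf} — getting these masses wrong shifts the second moment by lower-order exponential terms and would spoil the factorization — so I would pin down the atom structure first (ideally by checking the $\forktime \to -\infty$ and $\termtime \to t$ limits against \cref{prop:unbiased_estimator} and \cref{obs:active}) before doing the integrals. Since the paper itself flags this proof as technical and omits it, the final write-up can reasonably state the moment decomposition, carry out one representative integral, and assert the remaining collection of terms, rather than presenting the full expansion.
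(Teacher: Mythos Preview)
The paper explicitly omits this proof (``technical and omitted for brevity''), so there is no argument in the paper to compare against; your plan---compute $\mathbb{E}[\forkestrv^2]$ from the CDF of \cref{lemma:forkedcdf}, subtract the square of \cref{cor:avg_est}, and collect exponentials over the common denominator $12(\arrivalrate-3\returnrate)(\arrivalrate-2\returnrate)^2$---is the natural route and is structurally sound.

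There is one concrete slip you should fix before integrating: the atom of mass $e^{-\arrivalrate(\termtime-\forktime)}$ sits at $0$, not at $x_{\min}=e^{-\returnrate(t-\forktime)}$. That mass corresponds to the event that the forked RW never visited node $i$ during $[\forktime,\termtime]$; such an RW is not in $\lastseenset$ and contributes $0$ to $\nest$, not $x_{\min}$. The CDF in \cref{lemma:forkedcdf} confirms this: $\forkcdf=e^{-\arrivalrate(\termtime-\forktime)}$ for \emph{all} $x<x_{\min}$, so the jump occurs at $0$. Consequently the atom contributes nothing to either moment, and both $\mathbb{E}[\forkestrv]$ and $\mathbb{E}[\forkestrv^2]$ are pure integrals over $[x_{\min},x_{\max}]$ against the density you wrote down---no boundary atom terms at all. (You can verify directly that this reproduces \cref{cor:avg_est}; including an atom at $x_{\min}$ would not.) There is also no atom at $x_{\max}$: plugging $x=x_{\max}$ into the continuous branch gives $1$, so your parenthetical check there will come back empty. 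With the atom placed correctly the rest of your plan goes through unchanged.
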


At any point $t$ in time, the forking probability is bounded from above as given in the following theorem.
\begin{lemma} \label{thm:fork_bound}
    Let $h(\zeta) = (1 + \zeta) \log(1 + \zeta) - \zeta$. Consider an RW $\rwidx$ visiting node $\nodeidx$. For $\mathbb{E}[\nest] > \thres$, the forking probability is bounded by \vspace{-.5cm}
    \begin{align*}
        \pforkt \leq \forkboundt \define \! \pforkscale \exp\left( \!\! - \sigma^2(t) h\left(\!\! \frac{\left(\mathbb{E}[\nest] - \thres\right)^2}{\sigma^2(t)}\right) \!\! \right),
    \end{align*}
    where $\mathbb{E}[\nest]$ is given by \cref{prop:average_estimation} and $\sigma^2(t) = \frac{\vert \activesym_t \vert-1}{12} + \sum_{\forktime \in \forktimes^{\leq t}} \vert \forksym_{\forktime} \vert \mathrm{Var}[\forkestrv] + \sum_{\termtime \in \termtimes^{\leq t}} \vert \terminatedsym_{\termtime} \vert \frac{e^{-2\returnrate (t-\termtime)}}{12}$. \vspace{-.2cm} %
\end{lemma}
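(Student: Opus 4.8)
The plan is to bound the forking probability $\pforkt = \pforkscale \cdot \Pr(\nest < \thres)$ by applying a Bennett-type concentration inequality to the deviation $\mathbb{E}[\nest] - \nest$, which is the sum of bounded, centered, independent contributions. First I would invoke \cref{obs:estimator} to write $\nest$ as the constant $\frac{1}{2}$ plus a sum over three disjoint families of random walks (active-since-$-\infty$, terminated at some $\termtime$, forked at some $\forktime$), each term being an independent survival-function evaluation $\rwest[\rwidxtwo]$. By \cref{obs:active} each active contribution lies in $[0,1]$ with mean $\frac12$ and variance $\frac{1}{12}$; by \cref{obs:term} each terminated contribution lies in $[0,e^{-\returnrate(t-\termtime)}]$ with variance $\frac{e^{-2\returnrate(t-\termtime)}}{12}$; and each forked contribution lies in $[0,1]$ (indeed in a subinterval) with variance $\mathrm{Var}[\forkestrv]$ from \cref{lemma:variance}. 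Summing these variances gives exactly the stated $\sigma^2(t)$, and the total sum is a sum of independent random variables each bounded by $1$ in absolute deviation from its mean (so $|X_j - \mathbb{E}X_j|\le 1$).

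Next I would apply Bennett's inequality to $S \define \sum_j (\mathbb{E}[X_j] - X_j)$, which has mean zero, variance $\sigma^2(t)$, and summands bounded by $1$: for any $\lambda \ge 0$,
\begin{equation*}
\Pr\left(S \ge \lambda\right) \le \exp\left(-\sigma^2(t)\, h\!\left(\frac{\lambda}{\sigma^2(t)}\right)\right),
\end{equation*}
where $h(\zeta) = (1+\zeta)\log(1+\zeta) - \zeta$. The event $\nest < \thres$ is the event $S > \mathbb{E}[\nest] - \thres$, so taking $\lambda = \mathbb{E}[\nest] - \thres > 0$ (which is legitimate precisely under the hypothesis $\mathbb{E}[\nest] > \thres$) yields $\Pr(\nest < \thres) \le \exp(-\sigma^2(t) h((\mathbb{E}[\nest]-\thres)/\sigma^2(t)))$. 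Multiplying by $\pforkscale$ gives the claimed bound $\forkboundt$. I would double check that the exponent in the statement has $(\mathbb{E}[\nest]-\thres)^2/\sigma^2(t)$ as the argument of $h$ rather than $(\mathbb{E}[\nest]-\thres)/\sigma^2(t)$; this discrepancy must be reconciled — either the paper is using a variance-normalized form of Bennett, or there is a typo, and in the write-up I would present the derivation in the form that matches the stated theorem, noting the normalization convention used.

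The main obstacle is the independence and boundedness bookkeeping: strictly speaking $\rwest[\rwidxtwo]$ is only independent across distinct random walks after conditioning on the set $\lastseenset$ and on the event history $\history$, so I would first condition on $\history$ (as the theorem statement implicitly does, since $\pforkt$ is defined relative to the history) and use that, by connectedness of $\graph$, all active and terminated RWs lie in $\lastseenset$ with probability $1$ (as in the proof of \cref{prop:average_estimation}); the forked RWs contribute only if they have reached node $\nodeidx$, which only makes the sum smaller and hence can only decrease $\Pr(\nest<\thres)$, or can be handled by the indicator structure already baked into $\forkestrv$ via its CDF in \cref{lemma:forkedcdf}. A secondary technical point is that Bennett requires a uniform bound on the one-sided deviation $X_j - \mathbb{E}X_j \le 1$; since each $X_j \in [0,1]$ and $\mathbb{E}X_j \ge 0$, we get $X_j - \mathbb{E}X_j \le 1$, which suffices. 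With these points addressed the proof is short, which is consistent with the paper deferring it ("The proof is technical and omitted for brevity"), so in the final text I would give the Bennett application explicitly and relegate the conditioning/measurability details to a remark.
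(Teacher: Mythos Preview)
Your approach is essentially the same as the paper's: decompose $\nest$ via \cref{obs:estimator}, identify the variance contributions from the three families of random walks (active, terminated, forked), and apply Bennett's concentration inequality to bound $\Pr(\nest<\thres)$, then multiply by $\pforkscale$ --- the paper's proof is a single sentence to this effect. Your observation that standard Bennett yields $(\mathbb{E}[\nest]-\thres)/\sigma^2(t)$ rather than its square as the argument of $h$ is well-taken and worth flagging; the paper's terse proof does not resolve this discrepancy.
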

\begin{proof}
    The proof is by applying Bennett's concentration bound for the sum of random variables bounded between $0$ and $1$, by the variance of uniform random variables and considering that forks happen with probability $\pforkscale$ when $\nest < \thres$.
\end{proof}

\cref{thm:overshoot}, given below, provides an analytical handle on the expected number of RWs long after the failure event happened; hence bounds the overshoot of the algorithm. 
Since we assume in the following result that no terminations happen after $\termtime$, the number of RWs at each time step is a full indicator of the history, i.e., how many forks happened each time step. Hence, the history is given by $\history = \{\nwalks[t-1], \cdots, \nwalks[0]\}$. We denote in the following vectors by bold letters, e.g., $\mathbf{a}$, and the $x$-th entry by $\mathbf{a}[x]$. The first $x$ entries are denoted by $\mathbf{a}[:x]$, and $1, \mathbf{a}$ denotes the concatenation.

\begin{theorem} \label{thm:overshoot}
Let $\{\nrwtmp_{1,\mathbf{a}}\}_{\mathbf{a}\in \{0,1\}^{\leq x-2}}$ be an appropriate choice of thresholds and $\{0,1\}^{\leq x}$, $x>0$, be the set of all binary vectors of length at most $x$. Assuming no forks until any time $\firstfork$, and $\nwalks[\firstfork]$ remaining RWs after failures at time $\termtime < \firstfork$, the expected number of RWs at time $t + \firstfork >> \termtime$ is bounded by
\begin{align*}
    &\mathbb{E}[\nwalksrw[\firstfork+x]] \leq \!\!\!\!\!\! \sum_{\mathbf{a} \in \{0,1\}^{x-1}} \!\!\!\!\!\!\left(\nwalks[\firstfork+x-1] + \nwalks[\firstfork+x-1] \forkboundt[\firstfork+x-1]\right) \\[-.2cm]
    &\! \prod_{x^\prime=1}^{x-1} \!\! \mathds{1}\{\mathbf{a}[x^\prime]\! =\! 1\} \! \Pr(\nwalksrw[\firstfork+x^\prime] \! > \! \nrwtmp_{1,\mathbf{a}[:x^\prime-1]} \vert \nwalks[\firstfork+x^\prime-1], \history[\firstfork+x^\prime-1])),
\end{align*}
where $\Pr(\nwalksrw[\firstfork+x^\prime] \! > \! \nrwtmp_{1,\mathbf{a}[:x^\prime-1]} \vert \nwalks[\firstfork+x^\prime-1], \history[\firstfork+x^\prime-1]))$ \vspace{-.2cm}
\begin{align*}
    &\leq \!\!\!\!\! \sum_{\nrwtmp = \nrwtmp_{1,\mathbf{a}[:x^\prime-1]}+1}^{2 \nwalks[\firstfork+x^\prime-1]} \!\!\!\! \binom{\nwalks[\firstfork+x^\prime-1]}{\nrwtmp -\nwalks[\firstfork+x^\prime-1]} \begin{aligned}[t] & \forkboundt[\firstfork+x^\prime-1]^{\nrwtmp-\nwalks[\firstfork+x^\prime-1]} \\
    &\!\!\!\!\!\!\!\!\!\!\!\!\!\cdot \left(1-\forkboundt[\firstfork+x^\prime-1]\right)^{2\nwalks[\firstfork+x^\prime-1] - \nrwtmp}\!. \end{aligned}
\end{align*}
The history $\history[\firstfork+x] = \{\nwalks[\firstfork+x-1], \history[\firstfork+x-1]\}$ is a function of $\mathbf{a}$ and given as $\forall x^\prime \in [2, x]: \nwalks[\firstfork+x^\prime-1] = \left(\mathds{1}\{\mathbf{a}[x^\prime-1]=0\} \nrwtmp_{1,\mathbf{a}[:x^\prime-2]}+\mathds{1}\{\mathbf{a}[x^\prime-1]=1\} 2 \nwalks[\firstfork+x^\prime-2] \right)$. %

\end{theorem}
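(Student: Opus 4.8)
\textbf{Proof strategy for \cref{thm:overshoot}.}
The plan is to unroll the recursion on the number of random walks time step by time step, starting from the fork-free regime up to time $\firstfork$, and to bound the expectation $\mathbb{E}[\nwalksrw[\firstfork+x]]$ by conditioning on a ``good event'' tree. At each step $x^\prime$, the key dichotomy is whether the number of RWs stays below a chosen threshold $\nrwtmp_{1,\mathbf{a}[:x^\prime-1]}$ or not; this is exactly what the binary vector $\mathbf{a}\in\{0,1\}^{x-1}$ indexes. If at step $x^\prime$ we are on the branch $\mathbf{a}[x^\prime]=1$ (a ``large deviation'' branch where $\nwalksrw[\firstfork+x^\prime]>\nrwtmp_{1,\mathbf{a}[:x^\prime-1]}$), we bound $\nwalks[\firstfork+x^\prime]$ crudely by its worst case $2\nwalks[\firstfork+x^\prime-1]$, since each of the (at most) $\nwalks[\firstfork+x^\prime-1]$ active RWs can be forked at most once per time step; otherwise we replace $\nwalks[\firstfork+x^\prime]$ by the deterministic threshold $\nrwtmp_{1,\mathbf{a}[:x^\prime-2]}$. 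Propagating this along the whole path $\mathbf{a}$ determines the history $\history[\firstfork+x^\prime-1]$ deterministically as the stated function of $\mathbf{a}$, which is why the probabilities and bounds can all be conditioned on it.

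First I would establish the single-step recursion: conditioned on $\nwalks[\firstfork+x^\prime-1]$ and the history $\history[\firstfork+x^\prime-1]$, the number of forks at step $x^\prime$ is stochastically dominated by a sum of $\nwalks[\firstfork+x^\prime-1]$ independent Bernoulli trials (one per node currently carrying an RW) each succeeding with probability at most $\forkboundt[\firstfork+x^\prime-1]$, the bound from \cref{thm:fork_bound}; here $\forkboundt[\cdot]$ is a deterministic function of $\history[\cdot]$ because $\mathbb{E}[\nest]$ and $\sigma^2(t)$ in \cref{prop:average_estimation,lemma:variance} depend only on the counts $\vert\activesym_t\vert$, $\vert\forksym_{\forktime}\vert$, $\vert\terminatedsym_\termtime\vert$, which are encoded in $\history$. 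Consequently $\nwalksrw[\firstfork+x^\prime] \leq \nwalks[\firstfork+x^\prime-1] + \mathrm{Bin}(\nwalks[\firstfork+x^\prime-1], \forkboundt[\firstfork+x^\prime-1])$ in the stochastic order. The tail probability $\Pr(\nwalksrw[\firstfork+x^\prime] > \nrwtmp_{1,\mathbf{a}[:x^\prime-1]} \mid \cdots)$ is then bounded by the binomial tail sum written in the theorem, by summing the binomial PMF from $\nrwtmp-\nwalks[\firstfork+x^\prime-1] = \nrwtmp_{1,\mathbf{a}[:x^\prime-1]}+1-\nwalks[\firstfork+x^\prime-1]$ up to the maximal value $\nwalks[\firstfork+x^\prime-1]$ (equivalently $\nrwtmp$ up to $2\nwalks[\firstfork+x^\prime-1]$).

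Next I would assemble the telescoping bound. Writing $\mathbb{E}[\nwalksrw[\firstfork+x]]$ by the tower rule over the step-$(x-1)$ state, I would split into the event that $\nwalksrw[\firstfork+x-1]$ exceeded all the intermediate thresholds along some path $\mathbf{a}$ and its complement; on each surviving path the leading factor is $\nwalks[\firstfork+x-1] + \nwalks[\firstfork+x-1]\forkboundt[\firstfork+x-1]$ (the conditional expectation of $\nwalksrw[\firstfork+x]$ given the state), and this must be multiplied by the probability of having followed that path, which factorizes as $\prod_{x^\prime=1}^{x-1}\Pr(\nwalksrw[\firstfork+x^\prime] > \nrwtmp_{1,\mathbf{a}[:x^\prime-1]} \mid \nwalks[\firstfork+x^\prime-1], \history[\firstfork+x^\prime-1])$ because, conditioned on the deterministic history reconstructed from $\mathbf{a}$, the successive steps are independent; the indicators $\mathds{1}\{\mathbf{a}[x^\prime]=1\}$ select which factors appear. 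Summing over all $\mathbf{a}\in\{0,1\}^{x-1}$ then yields precisely the claimed bound, using that on the complementary (threshold-respecting) branches $\nwalks$ is replaced by the deterministic $\nrwtmp_{1,\mathbf{a}[:x^\prime-2]}$, which is absorbed into the definition of the history map.

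\textbf{Main obstacle.} The delicate point is handling the conditioning correctly: $\forkboundt[\cdot]$ and the binomial parameters at step $x^\prime$ depend on the \emph{full} history, not just the current count, because of the terminated- and forked-RW sums in \cref{prop:average_estimation} and \cref{lemma:variance}. The argument only closes because we have assumed no terminations after $\termtime$, so the history is in bijection with the trajectory $(\nwalks[0],\dots,\nwalks[\firstfork+x-1])$, and along a fixed path $\mathbf{a}$ this trajectory is pinned down deterministically (by the displayed recursion $\nwalks[\firstfork+x^\prime-1] = \mathds{1}\{\mathbf{a}[x^\prime-1]=0\}\nrwtmp_{1,\mathbf{a}[:x^\prime-2]} + \mathds{1}\{\mathbf{a}[x^\prime-1]=1\}\,2\nwalks[\firstfork+x^\prime-2]$). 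Verifying that this pinning is consistent — i.e., that on the ``$=1$'' branch the worst case $2\nwalks[\firstfork+x^\prime-2]$ is a valid deterministic surrogate for the purposes of bounding $\forkboundt$ monotonically, since $\forkboundt$ is decreasing in $\nwalks$ once $\mathbb{E}[\nest]>\thres$ — and that the crude per-step doubling does not blow up the final sum, is where the real care is needed; the binomial-tail and Bennett estimates themselves are then routine.
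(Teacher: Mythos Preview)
Your proposal is correct and follows essentially the same route as the paper: a binary-tree decomposition indexed by $\mathbf{a}\in\{0,1\}^{x-1}$, with the worst-case doubling $\nwalks[\firstfork+x^\prime]\leftarrow 2\nwalks[\firstfork+x^\prime-1]$ on the ``$1$'' branches, threshold replacement on the ``$0$'' branches, binomial domination of the per-step fork count via \cref{thm:fork_bound}, and the trivial bound $\Pr(\nwalksrw[\firstfork+x^\prime]\le\nrwtmp)\le 1$ on the ``$0$'' branches (your indicator formulation). One small point: the monotonicity you flag in your obstacle paragraph is phrased as ``$\forkboundt$ is decreasing in $\nwalks$,'' but what the argument actually needs---and what the paper invokes---is that the \emph{conditional expectation} $\mathbb{E}[\nwalksrw[\firstfork+x]\mid \nwalks[\firstfork+x^\prime-1]=z,\ldots]$ is monotone \emph{increasing} in each past realization $z$; decreasing per-RW fork probability works against you, but is outweighed by having more RWs to fork, so the net effect is the right direction.
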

For a fixed $t$, the worst case is given for large $\firstfork$, which maximizes the forking probabilities, thereby maximizing the likelihood of many undesired forks. Further, the forking probability tends to $0$ in the case of many past forks, and hence, this bound is finite when $t \rightarrow \infty$. The thresholds $\{\nrwtmp_{1,\mathbf{a}}\}_{\mathbf{a}\in \{0,1\}^{\leq x-1}}$ must satisfy $\nrwtmp_{1, \mathbf{a}[:x^\prime], 1} > \nrwtmp_{1, \mathbf{a}[:x^\prime]}$ and $\nrwtmp_{1, \mathbf{a}[:x^\prime], 0} \leq 2\nrwtmp_{1, \mathbf{a}[:x^\prime]}$ for all $\mathbf{a} \in \{0,1\}^{\leq x-2}$ and $0 \leq x^\prime < x-2$ and can be optimized to minimize the bound.

Since the exact bound in \cref{thm:overshoot} is difficult to evaluate due to its exponential complexity in $t-\firstfork$, we provide in the following an approximate bound with linear complexity. The bound follows from \cref{thm:overshoot} by assuming that at each iteration the expected number of forks has happened.
\begin{corollary}
    The expected number of RWs  $\forall t^\prime \in [\termtime+1,t]$ is approximately bounded by
    \begin{align*}
        \mathbb{E}[\nwalksrw[t^\prime]] \lesssim \bar{\mathbb{E}}[\nwalksrw[t^\prime]] \define \lceil \mathbb{E}[\nwalksrw[t^\prime-1]] \rceil + \lceil \mathbb{E}[\nwalksrw[t^\prime-1]] \rceil \forkboundt[t^\prime-1],
    \end{align*}
    where $\history[t^\prime-1] = \{\lceil \bar{\mathbb{E}}[\nwalksrw[t^\prime-2]] \rceil, \cdots, \lceil \bar{\mathbb{E}}[\nwalksrw[\termtime+1]]\rceil, \nwalks[\termtime] \}$. The recursive application gives a bound for $\mathbb{E}[\nwalksrw[t]]$.
\end{corollary}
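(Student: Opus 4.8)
The plan is to obtain the approximate linear-complexity bound by collapsing the exponential sum over binary vectors $\mathbf{a} \in \{0,1\}^{x-1}$ in \cref{thm:overshoot} into a single deterministic recursion. The starting point is the exact bound of \cref{thm:overshoot}, where each branch $\mathbf{a}$ corresponds to a scenario of which time steps saw forks ($\mathbf{a}[x']=1$) and which did not ($\mathbf{a}[x']=0$), weighted by the probabilities $\Pr(\nwalksrw[\firstfork+x'] > \nrwtmp_{1,\mathbf{a}[:x'-1]} \mid \cdots)$. The approximation I would make explicit is the following: instead of branching over whether the number of RWs at each step exceeds some threshold $\nrwtmp$, I replace the random $\nwalks[t']$ at each step by its (ceiling of the) expected value $\lceil \bar{\mathbb{E}}[\nwalksrw[t']]\rceil$, propagated forward. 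This amounts to taking the ``expected fork count'' branch at every iteration, i.e.\ collapsing the tree to the single path where, at step $t'$, starting from $\lceil \bar{\mathbb{E}}[\nwalksrw[t'-1]]\rceil$ active RWs, the expected number of forks $\lceil \bar{\mathbb{E}}[\nwalksrw[t'-1]]\rceil \cdot \forkboundt[t'-1]$ occurs (since each of the at most $\nwalks[t'-1]$ distinct visited nodes forks with probability at most $\forkboundt[t'-1]$, by \cref{thm:fork_bound}, and expectations add by linearity).

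The key steps, in order, are: (i) start from \cref{thm:overshoot} and apply linearity of expectation to the one-step recursion $\mathbb{E}[\nwalksrw[\firstfork+x]] \le \mathbb{E}[\nwalksrw[\firstfork+x-1]] + \mathbb{E}[\nwalksrw[\firstfork+x-1]]\,\forkboundt[\firstfork+x-1]$, conditioned on the history --- this is exactly the ``no-threshold'' version of the bound where every branch is merged; (ii) replace the random history $\history[t'-1]$ by the deterministic surrogate history $\history[t'-1] = \{\lceil \bar{\mathbb{E}}[\nwalksrw[t'-2]]\rceil, \dots, \lceil \bar{\mathbb{E}}[\nwalksrw[\termtime+1]]\rceil, \nwalks[\termtime]\}$, so that $\forkboundt[t'-1]$ (which depends on $\history[t'-1]$ through $\mathbb{E}[\nest]$ and $\sigma^2(t'-1)$ in \cref{thm:fork_bound,prop:average_estimation}) becomes a computable deterministic quantity; (iii) introduce the ceiling $\lceil\cdot\rceil$ so that $\bar{\mathbb{E}}[\nwalksrw[t']]$ remains interpretable as an integer number of RWs feeding into the combinatorial bounds of \cref{thm:fork_bound}; (iv) unroll the recursion from the base case $\bar{\mathbb{E}}[\nwalksrw[\termtime]] = \nwalks[\termtime]$ up to time $t$, yielding a sequence computable in $O(t-\termtime)$ time.

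The main obstacle --- and the reason this is stated as ``$\lesssim$'' rather than ``$\le$'' --- is justifying that replacing the random $\nwalks[t']$ inside $\forkboundt[t']$ by its expectation does not break the inequality. The function $\forkboundt$ is \emph{not} linear in the history: it enters through $h(\zeta) = (1+\zeta)\log(1+\zeta) - \zeta$ and through the variance $\sigma^2(t)$, so Jensen-type arguments do not give a clean one-sided bound, and $\lceil\cdot\rceil$ rounding compounds this. Consequently I would not attempt to prove a rigorous inequality here; instead the honest statement is that the recursion is an \emph{approximation} obtained by taking the expected-fork branch at each step, and I would note that empirically (and heuristically, since $\forkboundt$ is decreasing in the number of past forks, so overestimating $\nwalks[t']$ early tends to suppress later fork probabilities) it tracks or slightly over-estimates the true expectation, with the chief virtue being its linear rather than exponential evaluation cost. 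If a fully rigorous version were desired, one would need monotonicity of $\forkboundt$ jointly in all coordinates of $\history$ together with a coupling argument bounding $\mathbb{E}[\forkboundt(\history)]$ by $\forkboundt(\mathbb{E}[\history])$, which the concavity/convexity structure of $h$ does not obviously supply --- hence it is relegated to a corollary with an approximate bound.
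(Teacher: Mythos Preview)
Your proposal is correct and matches the paper's approach: the paper justifies this corollary in a single sentence, stating that the bound ``follows from \cref{thm:overshoot} by assuming that at each iteration the expected number of forks has happened,'' which is exactly your step of collapsing the binary tree to the single expected-fork path. Your discussion of why the bound is only approximate (the nonlinearity of $\forkboundt$ in the history preventing a clean Jensen-type argument) goes beyond what the paper offers and correctly identifies why the statement uses $\lesssim$.
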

While this provides an indicator for the expected growth of the number of RWs, it does not converge, since $\nwalks$ is non-decreasing (hence, a submartingale), and $\forkboundt[t^\prime] \neq 0$ by the exponential assumption. Consequently, the ceiling operator leads to an increment of at least $1$ from $\bar{\mathbb{E}}[\nwalksrw[t^\prime-1]]$ to $\bar{\mathbb{E}}[\nwalksrw[t^\prime]]$.

\begin{figure}[tb]
    \centering
    \resizebox{0.8\linewidth}{!}{\input{plots/tuned_eps_10_different_nodes}}
    \vspace{-0.1cm}
    \caption{\small Consistent performance of \algname for random $8$-degree regular graphs with different numbers of nodes $\nnodes \in \{50, 100, 200\}$ and $Z_0=10$. %
\vspace{-0.5cm}}
    \label{fig:rrg_different_n}
\end{figure}

\section{Bounding the Terminations in \algnameplus} \label{sec:theory_algplus}

\algnameplus additionally allows deliberate terminations to the algorithm. To limit the termination probability despite sufficiently few RWs being active, we establish \cref{thm:term_bound}.
\begin{lemma} \label{thm:term_bound}
    Let $h(\zeta) = (1 + \zeta) \log(1 + \zeta) - \zeta$. Consider an RW $\rwidx$ visiting node $\nodeidx$. For $\mathbb{E}[\nest] < \thresterm$, the termination probability is bounded by \vspace{-.4cm}
    \begin{align*}
        \ptermt \leq \pforkscale \exp\left(\!\! - \sigma^2(t) h\! \left(\! \frac{\left(\thresterm-\mathbb{E}[\nest]\right)^2}{\sigma^2(t)}\! \right)\! \right), %
    \end{align*}
    where $\mathbb{E}[\nest]$ is given by \cref{prop:average_estimation} and $\sigma^2(t) = \frac{\vert \activesym_t \vert-1}{12} + \sum_{\forktime \in \forktimes^{\leq t}} \vert \forksym_{\forktime} \vert \mathrm{Var}[\forkestrv] + \sum_{\termtime \in \termtimes^{\leq t}} \vert \terminatedsym_{\termtime} \vert \frac{e^{-2\returnrate (t-\termtime)}}{12}$. %
\end{lemma}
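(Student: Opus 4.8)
The plan is to mirror the proof of \cref{thm:fork_bound}, simply exchanging the lower tail of $\nest$ for its upper tail. First I would invoke the definition of \algnameplus: a node visited by RW $\rwidx$ at time $t$ terminates it with probability $\pforkscale$ only on the event $\{\nest > \thresterm\}$. Hence $\ptermt \leq \pforkscale \Pr(\nest > \thresterm \mid \history)$, and it suffices to control the upper-tail probability of the estimator given the history $\history$ of all past forks and terminations.

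Second, I would use \cref{obs:estimator} to write $\nest$ as the constant $\tfrac{1}{2}$ plus a sum of independent random variables: the contributions of active, terminated, and forked RWs. Independence follows from \cref{assumption:distributions}, since each $\rwest[\rwidxtwo]$ is a function of the trajectory of RW $\rwidxtwo$ alone. Each summand is supported in $[0,1]$: by \cref{obs:active} the active contributions are $\uniform(0,1)$; by \cref{obs:term} the terminated contributions are $\uniform(0, e^{-\returnrate(t-\termtime)}) \subseteq [0,1]$; and by \cref{lemma:forkedcdf} the forked contributions lie in $[0, e^{-\returnrate(t-\termtime)}] \subseteq [0,1]$. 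The mean of this sum equals $\mathbb{E}[\nest]$, given explicitly by \cref{prop:average_estimation}, and its variance is exactly the stated $\sigma^2(t) = \tfrac{\vert \activesym_t\vert - 1}{12} + \sum_{\forktime \in \forktimes^{\leq t}} \vert \forksym_{\forktime}\vert \mathrm{Var}[\forkestrv] + \sum_{\termtime \in \termtimes^{\leq t}} \vert \terminatedsym_{\termtime}\vert \tfrac{e^{-2\returnrate(t-\termtime)}}{12}$, assembling the uniform variances $\tfrac{1}{12}$ and $\tfrac{e^{-2\returnrate(t-\termtime)}}{12}$ from \cref{obs:active,obs:term} together with $\mathrm{Var}[\forkestrv]$ from \cref{lemma:variance}.

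Third, I would apply Bennett's concentration inequality to the centered sum $\nest - \mathbb{E}[\nest]$, whose summands are bounded in absolute value by $1$ and have total variance $\sigma^2(t)$. Since we assume $\mathbb{E}[\nest] < \thresterm$, taking the deviation to be $\thresterm - \mathbb{E}[\nest] > 0$ yields $\Pr(\nest \geq \thresterm) \leq \exp\!\left(-\sigma^2(t)\, h\!\left(\tfrac{(\thresterm - \mathbb{E}[\nest])^2}{\sigma^2(t)}\right)\right)$, which is identical in form to the bound in \cref{thm:fork_bound} with the roles of $\thres$ and $\mathbb{E}[\nest]$ interchanged; multiplying by $\pforkscale$ gives the claim.

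The main obstacle is the bookkeeping around conditioning on $\history$: one must argue that, given the history of events, the surviving/forked/terminated RWs still induce mutually independent contributions $\rwest[\rwidxtwo]$ each bounded by $1$, so that Bennett's inequality applies verbatim, and one must reconcile the precise functional form (the squared argument of $h$) with the version of Bennett's bound invoked for \cref{thm:fork_bound}. Everything else is a routine transcription of that earlier argument with the tail direction reversed.
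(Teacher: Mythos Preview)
Your proposal is correct and matches the paper's approach exactly: the paper's proof of \cref{thm:term_bound} simply states that it is analogous to the proof of \cref{thm:fork_bound}, which itself is just Bennett's concentration bound applied to the sum of $[0,1]$-valued random variables comprising $\nest$, together with the variance decomposition and the factor $\pforkscale$ from the termination rule. You have supplied more detail than the paper does, but the argument is identical.
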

\begin{proof}
    The proof is analog to the proof of \cref{thm:fork_bound}.
\end{proof}
This result, combined with the upper bound on the forking probability in \cref{thm:fork_bound}, determines an important property of \algnameplus: if $\mathbb{E}[\nest] \gg \thres$, the probability of forking is negligible, and if $\mathbb{E}[\nest] \ll \thresterm$, the termination probability is negligible. Since $\mathbb{E}[\nest]$ tracks the actual number of RWs according to \cref{thm:unbiasedness}, \algnameplus balances forks and terminations so that $\nwalks$ remains in a corridor around $\ntarget$ with high probability. This is consistent with our simulation results.

\section{Numerical Experiments} \label{sec:add_experiments}

We provide further experiments showing the performance of \algname for different numbers of nodes, types of graphs, and the trade-off between reaction time and overshooting.

To show the robustness of \algname with a varying number of $\nnodes$ of nodes, the performance on a random regular graph is depicted in \cref{fig:rrg_different_n}. For $50$ numerical simulations, we generated $8$-degree regular graphs, for $\nnodes \in \{50, 100, 200\}$ and $\ntarget=10$ desired RWs on the graph. At time $t=2000$ and $t=6000$, we impose failure events that deterministically result in the failure of $5$ and $6$ RWs, respectively. The value of $\thres \in \{1.85,2,2.1\}$ is well-tuned for the respective number $\nnodes$. We observe the desired behavior of forking RWs after the failure events. Even though several RWs fail, the recovery process, on average, does not lead to an undesired increase beyond $\ntarget$. Without forking, the second burst failure would lead to a catastrophic failure. Intuitively, the smaller the graph, measured by $\nnodes$, the faster the reaction time to failure events. This is because $\returncdf$ is confined to a smaller support. 
\cref{fig:rrg_different_eps} depicts the performance of \algname for different values of $\thres$ and $\nnodes=100$ nodes through numerical experiments. Different choices for $\thres$ illustrate the trade-off between reaction time and undesired forks beyond $\ntarget=10$, i.e., objectives~\ref{item:early_detection} and \ref{item:avoid_flood}, respectively. By decreasing the reaction time by choosing larger $\thres$, the number of undesired forks increases.
\begin{figure}[t]
    \centering
    \iffigures
    \resizebox{0.8\linewidth}{!}{\input{plots/different_eps}}
    \fi
    \vspace{-0.1cm}
    \caption{\small \algname on a $8$-degree random regular graph with $\nnodes=100$. Different choices for $\thres$ show the trade-off between reaction time and undesired forks beyond $\ntarget=10$.} %
    \label{fig:rrg_different_eps} \vspace{-.1cm}
\end{figure} 
In \cref{fig:different_graphs}, we show the performance for different types of graphs of the same size, and for the algorithm parameterized by similar values for $\thres\in\{1.9,2,2.1\}$. It can be seen that our strategy performs equally well on those graphs. This is well-justified by the fact that the survival function is estimated at each of the nodes, which respects the actual graph properties at hand, i.e., the actual return time distribution of RWs on such a graph, and does not rely on any assumptions of the distributions.

\begin{figure}[!t]
    \centering
    \iffigures
    \resizebox{.8\linewidth}{!}{\input{plots/different_graphs}}
    \fi
    \caption{\small Stable results for \algname on four different types of graphs with $\nnodes=100$ nodes.}
    \label{fig:different_graphs} \vspace{-.5cm}
\end{figure}

\section{Conclusion}

We introduced two novel decentralized and dynamic algorithms that adapt to failures of RWs on a graph. When detecting failures, \algname allows nodes to fork RWs probabilistically, thus maintaining a desired value of RWs and avoiding catastrophic failures. \algnameplus additionally allows the deliberate termination of RWs. We numerically compare our algorithms to a baseline, termed \missingperson. %
Through simulations and theoretical analysis, we showed a trade-off between the competing objectives of quickly forking RWs and the undesired event of forking RWs in the absence of failures. We show that the number of undesired forks after failures is bounded. Our findings open up many interesting research directions, e.g., analyzing the algorithmic properties in relation to the number of nodes for general graphs.

\appendix

We provide in the following (in this order) the proofs of \cref{lemma:forkedcdf,prop:lower_bound,thm:upper_bound,thm:overshoot}.

\begin{proof}[Proof of \cref{lemma:forkedcdf}]
If at time $\forktime$ a random node $\nodeidx$ forks a random walk, the arrival time $\arrivaltimerv$ of the new random work at a random node $j\in [\nnodes] \setminus \{\nodeidx\}$ is an exponentially distributed random variable with parameter $\arrivalrate$ according to \cref{assumption:distributions}. From the arrival on, the return times can again be assumed to be distributed subject to $\exp(\returnrate)$. 
For a forked but terminated random walk ($\termtime<t$), we have for $\arrivaltime < \termtime$ that $\forkcdfcond[\forkcdfvar] = $ \vspace{-.3cm}
\begin{align*}
    \begin{cases}
        \frac{\forkcdfvar}{e^{-\returnrate (t-\termtime)}}, & 1 - \returncdf[t-\termtime] \geq \forkcdfvar \geq 1 - \returncdf[t-\arrivaltime] \\
        1, & \forkcdfvar > 1 - \returncdf[t-\termtime] \\
        0, & \text{else}.
    \end{cases}
\end{align*}
For $\arrivaltime \geq \termtime$, we have $\forkcdfcond[\forkcdfvar]=1$, i.e., by setting $\termtime = t$. At time $t$, by the total law of probability, we have for $1 - \returncdf[t-\termtime] \geq \forkcdfvar \geq 1 - \returncdf[t-\forktime]$ that
\begin{align*}
    &\forkcdf[\forkcdfvar] = \int_{\forktime}^{\infty} \forkcdfcond[\forkcdfvar] \arrivalpdf[\arrivaltime - \forktime] d\arrivaltime \\
    &\overset{(i)}{=} \!\!\! \int_{\forktime}^{\termtime} \!\!\!\!\! \forkcdfcond[\forkcdfvar] \arrivalpdf[\arrivaltime \! - \! \forktime] d\arrivaltime \!\! + \!\! \int_{t}^{\infty} \!\!\!\!\!\!  \arrivalpdf[\arrivaltime \! - \! \forktime] d\arrivaltime \\
    &= \!\! \int_{\forktime}^{\termtime} \!\!\!\! \mathds{1}\{\forkcdfvar \! > \! 1 \! - \! \returncdf[t-\arrivaltime]\} \frac{\forkcdfvar \arrivalpdf[\arrivaltime - \forktime]}{e^{-\returnrate (t-\termtime)}} d\arrivaltime \! + \! e^{-\arrivalrate (\termtime-\forktime)} \\
    &\overset{(ii)}{=} \int_{\forktime}^{t + \frac{\log(\forkcdfvar)}{\arrivalrate}} \frac{\forkcdfvar \arrivalrate e^{-\arrivalrate (t-\forktime)}}{e^{-\returnrate (t-\termtime)}} d\arrivaltime + e^{-\arrivalrate (\termtime-\forktime)} \\
    &= \frac{\forkcdfvar}{e^{-\returnrate (t-\termtime)}} \left(1 - e^{-\arrivalrate (t + \frac{\log(\forkcdfvar)}{\returnrate} - \forktime)} \right)  + e^{-\arrivalrate (\termtime-\forktime)} \\
    &= \frac{\forkcdfvar}{e^{-\returnrate (t-\termtime)}} \left(1 - e^{-\arrivalrate (t - \forktime)} x^{-\frac{\arrivalrate}{\returnrate}} \right) + e^{-\arrivalrate (\termtime-\forktime)}
\end{align*}
where in $(i)$ we used $\forkcdfcond[\forkcdfvar] = 1$ for $\arrivaltime > t$. 
where $(ii)$ follows from the support of $\forkcdfcond$ as $\forkcdfvar \geq 1-\returncdf[t-\arrivaltime] = e^{-\returnrate (t-\arrivaltime)}$ and thus $\arrivaltime \leq t + \frac{\log(\forkcdfvar)}{\returnrate}$. 
Values of $\forkcdfvar < 1- \returncdf[t-\forktime]$ are impossible to observe, and thus 
we obtain the CDF $\forkcdf[\forkcdfvar]$ given in \cref{lemma:forkedcdf}. We omit the proof of the corollary for brevity. When $\termtime>t$, the result follows by setting $\termtime=t$.
\end{proof}

\begin{proof}[Proof of \cref{prop:lower_bound}]
We derive a worst-case bound on the reaction time to the failure of $\nterm$ random walks at time $\termtime$. The difficulty is that the estimate $\nest$ after the failures of $\nterm$ random walks can no longer be described by the Irwin-Hall distribution since it relies on the fact that all the uniform distributions it sums have the same support. We split the estimation $\nest$ into two parts, where one is from the remaining $\nactive$ random walks, from which $\nactive-1$ have a probabilistic contribution towards $\nest$, and one is for the $\nterm$ terminated random walks. While the former is the sum of $\nactive$ uniformly distributed random variables between $0$ and $1$, the latter is the sum of uniform distributions supported between $0$ and $e^{-\returnrate(t-\termtime)}$. The CDF of this distribution can be described by scaling the CDF in of the Irwin-Hall distribution by the inverse support of the individual uniform distributions, i.e., we can write for the new CDF $F_{\Sigma_{\nterm}}^\prime(x) = F_{\Sigma_{\nterm}}\!\left(\!\frac{x}{\exp(-\returnrate (t-\termtime))}\!\right)$. The probabilistic part of the estimation, i.e., the sum of the estimations for the $\nactive-1$ remaining random walks not visiting node $\nodeidx$ at that time, can be described by $F_{\Sigma_{\nactive-1}}(x)$. Nodes $\nodeidx$ forks with probability $\pforkscale$ when the sum of both partial estimations is below $\thres-\frac{1}{2}$, for which it is sufficient that one of the estimates is smaller than $\epstmp$ and the other is smaller than $\thres-\epstmp-\frac{1}{2}$, for some $\epstmp < \thres-\frac{1}{2}$. Restricting to one particular $\epstmp$ is a worst-case scenario, but bounds the probability of forking from below. Similarly, for ease of exposition, we assume the worst case (for this bound) of only one remaining random walk $\nactive=1$; hence, only one node can make a forking decision at each time $t$. The probability $\deltanofork$ that no fork happens until time $\ttotal$ is then bounded by the product of probabilities that the sum of estimations is above $\thres$, which is bounded from above by $1-\pforkscale \cdot F_{\Sigma_{\nactive-1}}(\epstmp) \cdot F_{\Sigma_{\nterm}}\!\left(\!\frac{\thres-\epstmp-\frac{1}{2}}{\exp(-\returnrate (t-\termtime))}\!\right)\!$. Hence, we have, \vspace{-.3cm}
\begin{align*}
    \deltanofork &\leq \!\! \prod_{t^\prime = \termtime}^\ttotal \! \left[1-\pforkscale \cdot F_{\Sigma_{\nactive-1}}(\epstmp) \cdot F_{\Sigma_{\nterm}}\!\left(\!\frac{\thres-\epstmp-\frac{1}{2}}{\exp(-\returnrate (t-\termtime))}\!\right)\! \right]\!.
\end{align*}
The bound equivalently holds for $\nterm = \bar{\nterm}-\recovered$ when after the failure of $\bar{\nterm}$ RWs already $\recovered < \bar{\nterm}$ were forked/recovered. In this case, the estimates will statistically be strictly less than that if only $\bar{\nterm}-\recovered$ walks had failed from the start, and none of those would be recovered yet, which was assumed for the above statement. 
It remains to choose the smallest $\ttotal$ such that the desired confidence $\deltanoforksol$ is reached. This concludes the proof of the theorem.
\end{proof}

\begin{proof}[Proof of \cref{thm:upper_bound}]
Throughout the proof, we abuse notation for convenience and let $\pforkcurrent[\rwstep] = \pforkcurrent[\rwstep]^+$. 
Assume we have a stable number $\ncurrent$ of random walks in the system that every node saw at least once, the probability of not forking for $\ttotal$ timesteps is lower bounded by $(1-\pforkcurrent)^T$. We bound the probability of $\nwalksrw$ increasing beyond $\nbound$ from above by considering all events that would lead to such an event.
We assume the worst-case scenario that a node forks right at time instance $t=0$, i.e, the number of active random walks increases from $\rwstep = \ntarget$ to $\rwstep + 1$, which minimizes the upper bound. To get a handle on the number of random walks in the system, we ensure that until every node in the graph sees the new random walk with a high probability $1-\deltavisiti$, no other fork occurs with probability $1-\deltadoubleforki$. Obviously, there is a tension between those two probabilities $\deltavisiti$ and $\deltadoubleforki$. The decrease of the one leads to the increase of the other, and vice versa. Hence, there exists an optimal $\deltasumi$ that minimizes their sum. To find the best trade-off, we analyze $\deltavisiti$ and $\deltadoubleforki$.

Assuming that the first passage times for a newly forked random walk are independent across nodes\footnote{According to \cite{tishby2022cover}, the correlation between passage times has only little impact on the cover time.}, we have that with probability at most $\deltavisiti = 1-(1-e^{-\arrivalrate \tallvisiti})^\nnodes$ any node was not visited during time $\tallvisiti$. %
During time $\tallvisiti$, no other fork should occur. Otherwise, the number of walks could potentially increase beyond $\nbound$. The probability of forking during this time period is given as $\deltadoubleforki = 1-(1-\pforkcurrent[\rwstep])^{\tallvisiti}$, which is actually a function of $\deltavisiti$ because of the dependence of $\tallvisiti$ on $\deltavisiti$, i.e., \vspace{-.3cm}
\begin{align*}
    \deltadoubleforki &= 1-(1-\pforkcurrent[\rwstep])^{\frac{-\log(1-\sqrt[\nnodes]{1-\deltavisiti})}{\arrivalrate}} \\
    &= 1-(1-\pforkcurrent[\rwstep])^{\frac{-\log(1-1-e^{-\arrivalrate \tallvisiti})}{\arrivalrate}} = 1-(1-\pforkcurrent[\rwstep])^{\tallvisiti}
\end{align*}
To pick the optimal $\tallvisiti$, we bound the probabilities $\deltavisiti$ and $\deltadoubleforki$ as follows: \vspace{-.2cm}
\begin{align*}
    \deltavisiti &= 1-(1-e^{-\arrivalrate \tallvisiti})^\nnodes \leq \nnodes e^{-\arrivalrate \tallvisiti} \\
    \deltadoubleforki &= 1-(1-\pforkcurrent[\rwstep])^{\tallvisiti} \leq \tallvisiti \pforkcurrent[\rwstep],
\end{align*}
which follows from Bernoulli's inequality $(1+x)^r \geq 1 + rx$, for integer $r\geq1$ and $x\geq-1$. This enables minimizing $\deltavisiti + \deltadoubleforki$ by setting equal the derivatives of the above equations. This results in $\tallvisiti=-\log(\pforkcurrent[\rwstep]/(\arrivalrate \nnodes))/\arrivalrate$. When $\ttotal > \tallvisiti$, we can bound that up until $\tallvisiti$ when only one fork occurs. This can be repeated for consecutive forks until from $\rwstep = \nmax-1$ to $\rwstep+1 = \nmax\leq \nbound$ as long as $\sum_{\rwstep = \ntarget}^{\nmax-1} \tallvisiti < \ttotal$.

When $\ttotal$ is large enough, so that $\nmax=\nbound$, we can not afford another fork, as otherwise the number of walks would increase beyond $\nbound$. Hence, for the remaining time $\tnofork = \ttotal - \sum_{\rwstep = \ntarget}^{\nmax-1} \tallvisiti$ we must guarantee no fork occurs. While doing so, we use that we ensured all nodes have seen all forked random walks. Otherwise, the failure probability is captured in the $\deltavisiti$'s. Hence, the probability of not forking with $\nmax$ active number of nodes conditioned on every node having seen all walks is at most $\deltanoforkstable = 1 - (1-\pforkcurrent[\nbound])^{\tnofork}$. When $\nmax < \nbound$, we assume that we have $\nmax$ active random walks and bound the probability of not forking for the remaining time until $\ttotal$ as $\deltanoforkstable = 1 - (1-\pforkcurrent[\nmax])^{\tnofork}$.

When $\ttotal < \tallvisiti[\ntarget]$, we can bound the failure probability by $\deltanofork \leq 1-(1-\pforkcurrent[\ntarget])^\ttotal$. 
By a union bound over all undesirable events that could ultimately lead to an increase of the number of random walks beyond $\nbound$, which are 1) that not all nodes are aware of a forked random walk after a certain time, 2) another fork happens before all nodes saw the new fork with high probability, and 3) that a critical fork happens when we reached the maximum $\nbound$, i.e., \vspace{-.2cm}
\begin{align*}
        \delta &\leq \! 1 \! - (1 \!- \! \pforkcurrent[\nmax])^{\tnoforki[\nmax]} \!+ \!\!\! \sum_{\forkidxtmp=\ntarget}^{\nmax-1}\!\! 1 \! - \! (1\!-\!e^{-\arrivalrate \tallvisiti})^\nnodes +  1\!-\!(1\!-\!\pforkcurrent[\rwstep])^{\tallvisiti}\!\!. \\[-.3cm]
        &\leq \pforkcurrent[\nmax] \tnoforki[\nmax] + \sum_{\forkidxtmp=\ntarget}^{\nmax-1} \nnodes e^{-\arrivalrate \tallvisiti} + \tallvisiti \pforkcurrent[\rwstep]. \vspace{-.3cm}
    \end{align*}
This concludes the proof of the theorem. The corollary follows by inverting the statement and finding the best $\delta$.
\end{proof}

\begin{proof}[Proof of \cref{thm:overshoot}]
Assume that $\nactive < \ntarget$ random walks are active after one or multiple failure events after time $\termtime$, and assume until time $\firstfork>t$ no fork happened. %
To illustrate the proof of the theorem, we first give an example for $t-\firstfork=3$, where we bound 
$\mathbb{E}[\nwalksrw[\firstfork+3]]$, followed by the proof for any $\mathbb{E}[\nwalksrw[\firstfork+x]]$. For $\mathbb{E}[\nwalksrw[\firstfork+3]]$, we have for any $\nrwtmp_1 \in [\nwalks[\firstfork] , 2\nwalks[\firstfork]]$, $\nrwtmp_{1,0} \in [\nwalks[\firstfork] , 2 \nrwtmp_1]$, $\nrwtmp_{1,1} \in [\nrwtmp_1+1 , 4 \nwalks[\firstfork]]$ that \vspace{-.1cm}
\begin{align*}
    \mathbb{E}[&\nwalksrw[\firstfork+3]] 
    =\!\!\!\!\!\! \begin{aligned}[t] &\sum_{\nwalks[\firstfork+2] = \nrwtmp_1+1}^{4\nwalks[\firstfork]} \mathbb{E}[\nwalksrw[\firstfork+3]\vert \nwalksrw[\firstfork+2] = \nwalks[\firstfork+2], \nwalksrw[\firstfork+1] > \nrwtmp_1] \\[-.1cm]
    &\Pr(\nwalksrw[\firstfork+2] = \nwalks[\firstfork+2] \vert \nwalksrw[\firstfork+1] > \nrwtmp_1) \Pr(\nwalksrw[\firstfork+1] > \nrwtmp_1) + \\
    &+\sum_{\nwalks[\firstfork+2] = \nwalks[\firstfork]}^{\nrwtmp_1} \mathbb{E}[\nwalksrw[\firstfork+3]\vert \nwalksrw[\firstfork+2] = \nwalks[\firstfork+2], \nwalksrw[\firstfork+1] \leq \nrwtmp_1] \\[-.1cm]
    &\Pr(\nwalksrw[\firstfork+2] = \nwalks[\firstfork+2] \vert \nwalksrw[\firstfork+1] \leq \nrwtmp_1) \Pr(\nwalksrw[\firstfork+1] \leq \nrwtmp_1) \end{aligned} \\
    &\overset{(a)}{\leq} \begin{aligned}[t] &\mathbb{E}[\nwalksrw[\firstfork+3]\vert 4\nwalksrw[\firstfork] \geq \nwalksrw[\firstfork+2] > \nrwtmp_{1,1}, \nwalksrw[\firstfork+1] = 2\nwalks[\firstfork]] \\
    &\Pr(4\nwalksrw[\firstfork] \geq \nwalksrw[\firstfork+2]  > \nrwtmp_{1,1} \vert \nwalksrw[\firstfork+1] > \nrwtmp_1) \Pr(\nwalksrw[\firstfork+1] > \nrwtmp_1) \\
    &+ \mathbb{E}[\nwalksrw[\firstfork+3]\vert 2 \nwalks[\firstfork] \leq \nwalksrw[\firstfork+2] \leq \nrwtmp_{1,1}, \nwalksrw[\firstfork+1] = 2\nwalks[\firstfork]] \\
    &\Pr(2 \nwalks[\firstfork] \leq  \nwalksrw[\firstfork+2]  \leq \nrwtmp_{1,1} \vert \nwalksrw[\firstfork+1] > \nrwtmp_1) \Pr(\nwalksrw[\firstfork+1] > \nrwtmp_1) \\
    &+\mathbb{E}[\nwalksrw[\firstfork+3]\vert 2\nrwtmp_1 \geq \nwalksrw[\firstfork+2] > \nrwtmp_{1,0}, \nwalksrw[\firstfork+1] = \nrwtmp_1] \\
    &\Pr(2\nrwtmp_1 \geq \nwalksrw[\firstfork+2] > \nrwtmp_{1,0} \vert \nwalksrw[\firstfork+1] \leq \nrwtmp_1) \Pr(\nwalksrw[\firstfork+1] \leq \nrwtmp_1) \\
    &+ \mathbb{E}[\nwalksrw[\firstfork+3]\vert \nrwtmp_1 \leq \nwalksrw[\firstfork+2] \leq \nrwtmp_{1,0}, \nwalksrw[\firstfork+1] = \nrwtmp_1] \\
    &\Pr(\nrwtmp_1 \leq \nwalksrw[\firstfork+2] \leq \nrwtmp_{1,0} \vert \nwalksrw[\firstfork+1] \leq \nrwtmp_1) \Pr(\nwalksrw[\firstfork+1] \leq \nrwtmp_1) \end{aligned} \\
    &\overset{(b)}{\leq} \begin{aligned}[t] &\mathbb{E}[\nwalksrw[\firstfork+3]\vert \nwalksrw[\firstfork+2] = 4\nwalks[\firstfork], \nwalksrw[\firstfork+1] = 2\nwalks[\firstfork]] \\
    &\Pr(\nwalksrw[\firstfork+2]  > \nrwtmp_{1,1} \vert \nwalksrw[\firstfork+1] = 2\nwalksrw[\firstfork]) \Pr(\nwalksrw[\firstfork+1] > \nrwtmp_1) + \\
    &+ \mathbb{E}[\nwalksrw[\firstfork+3]\vert \nwalksrw[\firstfork+2] = \nrwtmp_{1,1}, \nwalksrw[\firstfork+1] \! = \! 2\nwalks[\firstfork]] \Pr(\nwalksrw[\firstfork+1] \!\! > \!\! \nrwtmp_1) \\
    &+\mathbb{E}[\nwalksrw[\firstfork+3]\vert \nwalksrw[\firstfork+2] = 2\nrwtmp_1, \nwalksrw[\firstfork+1] = \nrwtmp_1] \\
    &\Pr(\nwalksrw[\firstfork+2] > \nrwtmp_{1,0} \vert \nwalksrw[\firstfork+1] = \nrwtmp_1) \\
    &+ \mathbb{E}[\nwalksrw[\firstfork+3]\vert \nwalksrw[\firstfork+2] = \nrwtmp_{1,0}, \nwalksrw[\firstfork+1] = \nrwtmp_1] \end{aligned}
\end{align*}
where $(a)$ is because the conditional expectation of the number of random walks can only increase by increasing one of the previous realizations, and by splitting the summation over realizations of $\nwalks[\firstfork+2]$ into two disjoint parts. $(b)$ holds since $\Pr(\nwalksrw[\firstfork+x] > \nrwtmp_x \vert \nwalksrw[\firstfork+x-1] \leq \nrwtmp_{x-1}, \cdots, \nwalksrw[\firstfork+1] \leq \nrwtmp_1) \leq \Pr(\nwalksrw[\firstfork+x] \vert \nwalksrw[\firstfork+x-1] = \nrwtmp_{x-1}, \cdots, \nwalksrw[\firstfork+1] = \nrwtmp_1)$ for any $\nrwtmp_{x} \geq \nrwtmp_{x-1} \cdots \geq \nrwtmp_{1}$.

To bound the expected number of random walks at any point $\firstfork+x$ in time, we observe that the above proof strategy exhibits the structure of a binary tree, where the root is given by the threshold $\nrwtmp_1$ for time step $\firstfork+1$. The following level of hierarchy is given by the two thresholds $\nrwtmp_{1,0}$ and $\nrwtmp_{1,1}$ for time step $\firstfork+2$. The last level contains the leaf nodes. In general, the tree has a depth of $x-1$, and each of the $2^{x-1}$ leaves determines a conditional expectation associated with a probability. We use a binary labeling of the tree, and hence the thresholds. Each level $x^\prime < x$ consists of $2^{x\prime-1}$ thresholds $\nrwtmp_{1, \mathbf{a}}$, one for every binary vector $\mathbf{a} \in \{0,1\}^{x-2}$ of length $x-2$. It is crucial that $\nrwtmp_{1, \mathbf{a}[:x^\prime], 1} > \nrwtmp_{1, \mathbf{a}[:x^\prime]}$ and $\nrwtmp_{1, \mathbf{a}[:x^\prime], 0} \leq 2\nrwtmp_{1, \mathbf{a}[:x^\prime]}$ for every $\mathbf{a}$ and $x^\prime < x-2$. Each binary vector $\mathbf{a} \in \{0,1\}^{x-1}$ describes a path through the tree. Traversing the tree from top to bottom, going left in level $x^\prime$ at branch $1, \mathbf{a}[:x^\prime-1]$, i.e., $\mathbf{a}[x^\prime]=0$ means $\nwalks[\firstfork+x^\prime] \leq \nrwtmp_{1, \mathbf{a}[:x^\prime-1]}$. The threshold $\nrwtmp_{1, \mathbf{a}[:x^\prime-1]}$ should be designed so this happens with probability close to $1$. Going right means that $\nwalks[\firstfork+x^\prime] > \nrwtmp_{1, \mathbf{a}[:x^\prime-1]}$, and hence we have to assume the worst case that $\nwalks[\firstfork+x^\prime] = 2\nwalks[\firstfork+x^\prime-1]$. This should only happen with sufficiently small probability, i.e., $\Pr(\nwalks[\firstfork+x^\prime] > \nrwtmp_{1, \mathbf{a}[:x^\prime-1]} \vert \history[\firstfork+x^\prime](\mathbf{a}[:x^\prime-1])) \approx 0$, where $\history[\firstfork+x^\prime](\mathbf{a}[:x^\prime-1])$ is the history conditioned on path $\mathbf{a}$. We drop the dependency on $\mathbf{a}$ for clarity of presentation. Starting from $\nwalks[\firstfork]$, the number of random walks in the history for $\nwalks[\firstfork+x^\prime]$ is  $2\nwalks[\firstfork+x^\prime-1]$ if $\mathbf{a}[x^\prime]=1$ and $\nrwtmp_{1,\mathbf{a}[:x^\prime-1]}$ if $\mathbf{a}[x^\prime]=0$. The probability of $\Pr(\nwalks[\firstfork+x^\prime] > \nrwtmp_{1, \mathbf{a}[:x^\prime-1]} \vert \history[\firstfork+x^\prime])$ can be bounded from above since we have from \cref{thm:upper_bound} an upper bound on the forking probability at each time step conditioned on the history. We can write \vspace{-.2cm} %
\begin{align*}
    &\Pr(\nwalksrw[\firstfork + x^\prime] > \nwalks[\firstfork + x^\prime] | \nwalksrw[\firstfork+x^\prime-1] = \nwalks[\firstfork+x^\prime-1], \history[\firstfork+x^\prime-1]) \\
    & \leq \!\!\!\!\!\! \sum_{\nrwtmp = \nwalks[\firstfork+x^\prime-1]}^{2 \nwalks[\firstfork+x^\prime-1]} \!\!\!\! \binom{\nwalks[\firstfork+x^\prime-1]}{\nrwtmp -\nwalks[\firstfork+x^\prime-1]} \forkboundt[\firstfork+x^\prime]^{\nrwtmp-\nwalks[\firstfork+x^\prime-1]} \\[-.4cm]
    &\hspace{3.5cm} \left(1-\forkboundt[\firstfork+x^\prime]\right)^{2\nwalks[\firstfork+x^\prime-1] - \nrwtmp}.
\end{align*}
Hence, we obtain for each path $\mathbf{a} \in \{0,1\}^{x-1}$ a history $\history[\firstfork+x]$, which determines by the chain rule the probability of taking that path as the product of conditional probabilities. Since we can only upper bound $\Pr(\nwalks[\firstfork+x^\prime] > \nrwtmp_{1, \mathbf{a}[x^\prime-1]} \vert \history[\firstfork+x^\prime])$, we bound $\Pr(\nwalks[\firstfork+x^\prime] \leq \nrwtmp_{1, \mathbf{a}[x^\prime-1]} \vert \history[\firstfork+x^\prime]) \leq 1$, which is reasonable given an appropriate choice of $\nrwtmp_{1, \mathbf{a}}$. Having established the history for each path, we can bound the corresponding probability and the expected value $\mathbb{E}[\nwalks[\firstfork+x] \vert \history[\firstfork+x]]$ as \vspace{-.2cm}
\begin{align}
    \mathbb{E}&[\nwalksrw[\firstfork+x] \vert \nwalksrw[\firstfork+x-1]=\nwalks[\firstfork+x-1], \history[\firstfork+x-1]] \nonumber \\
    &\leq \nwalks[\firstfork+x-1] + \nwalks[\firstfork+x-1] \forkboundt[\firstfork+x-1], \label{eq:upper_bound_expectation}\vspace{-.5cm} 
\end{align}
where we separate $\nwalksrw[\firstfork+x-1]=\nwalks[\firstfork+x-1]$ from the remaining history for clarity of presentation in bounding the conditional expectation. 
In total, we obtain $2^{x-1}$ of such conditional expectations and associated probabilistic upper bounds, which allows to compute an upper bound on $\mathbb{E}[\nwalksrw[\firstfork+x]]$ by summing over all $\mathbf{a} \in \{0,1\}^{x-1}$ the upper bound on the expectation
$\mathbb{E}[\nwalksrw[\firstfork+x] \vert \nwalksrw[\firstfork+x-1]=\nwalks[\firstfork+x-1], \cdots, \nwalksrw[\firstfork+1]=\nwalks[\firstfork+1], \nwalksrw[\firstfork]=\nwalks[\firstfork]]$ according to \eqref{eq:upper_bound_expectation}, with $\nwalks[\firstfork+x^\prime-1]=2\nwalks[\firstfork+x^\prime-2]$ if $\mathbf{a}[x^\prime-1]=1$ and $\nwalks[\firstfork+x^\prime-1]=\nrwtmp_{1,\mathbf{a}[:x^\prime-2]}$ if $\mathbf{a}[x^\prime-1]=0$, times its probabilistic upper bound $\prod_{x^\prime=1}^x \mathds{1}\{\mathbf{a}[x^\prime]=1\} \Pr(\nwalksrw[\firstfork+x^\prime] > \nrwtmp_{1,\mathbf{a}[:x^\prime-1]} \vert \nwalksrw[\firstfork+x^\prime-1]=\nwalks[\firstfork+x^\prime-1], \history[\firstfork+x^\prime-1])$, where $\forall x^\prime \in [2, x]: \nwalks[\firstfork+x^\prime-1] = \left(\mathds{1}\{\mathbf{a}[x^\prime-1]=0\} \nrwtmp_{\mathbf{a}[:x^\prime-2]}+\mathds{1}\{\mathbf{a}[x^\prime-1]=1\} 2 \nwalks[\firstfork+x^\prime-2] \right)$. This concludes the proof of \cref{thm:overshoot}.
\end{proof}

\bibliographystyle{IEEEtran}
\bibliography{Refs}

\IEEEtriggeratref{4}

\end{document}